\documentclass[conference]{IEEEtran}
\IEEEoverridecommandlockouts
\AtBeginDocument{%
  \setlength{\textheight}{660pt}%
}

\usepackage{cite}
\usepackage{amsmath,amssymb,amsfonts}
\usepackage{amsthm}
\usepackage{algorithm}
\usepackage{algorithmic}
\usepackage{graphicx}
\usepackage{textcomp}
\usepackage{xcolor}
\usepackage{tikz}
\usetikzlibrary{arrows.meta}

\usepackage[
  pdfauthor={Wentao Zhang; Wentao Mo},
  pdftitle={Real-Time Hard Peak Age-of-Information Safety with No-Regret Learning},
  pdfsubject={},
  pdfkeywords={Age of Information, online convex optimization, real-time scheduling, safety-critical systems, adversarial channels}
]{hyperref}
\hypersetup{hidelinks}

\theoremstyle{plain}
\newtheorem{theorem}{Theorem}
\newtheorem{lemma}{Lemma}
\newtheorem{proposition}{Proposition}
\newtheorem{corollary}{Corollary}

\theoremstyle{definition}
\newtheorem{assumption}{Assumption}
\newtheorem{definition}{Definition}

\theoremstyle{remark}
\newtheorem*{remark}{Remark}
\def\BibTeX{{\rm B\kern-.05em{\sc i\kern-.025em b}\kern-.08em
    T\kern-.1667em\lower.7ex\hbox{E}\kern-.125emX}}
\begin{document}
\bstctlcite{IEEEtranBSTcontrol}

\title{Real-Time Hard Peak Age-of-Information Safety \\ with No-Regret Learning}

\author{%
\IEEEauthorblockN{Wentao Zhang and Wentao Mo\IEEEauthorrefmark{1}%
\thanks{\IEEEauthorrefmark{1} Corresponding author.}}
\IEEEauthorblockA{Tsinghua Shenzhen International Graduate School, Tsinghua University\\
Shenzhen, China\\
\{zhang-wt24,mow10\}@mails.tsinghua.edu.cn}
}

\maketitle

\begin{abstract}
Safety-critical IoT systems such as industrial closed-loop control, V2X coordination, and remote teleoperation require every sensor's peak Age of Information (peak AoI, also abbreviated PAoI) to stay below a hard per-slot deadline, not merely an average bound. Existing approaches meet this requirement only under restrictive assumptions: stochastic channels for Whittle-index AoI, simulator rollouts for deep reinforcement learning, or sublinear cumulative violation for long-term constrained online convex optimization. \textcolor{black}{Under adversarial coefficients, OCO-PAoI-Hard guarantees zero per-slot violation of the modeled AoI state under one-step viability and $O(\sqrt{T})$ regret against any static safe comparator; packet-level safety requires stronger service assumptions.} Our key observation is that the fractional peak-AoI deadline collapses exactly to an affine half-space constraint on the \textcolor{black}{resource-allocation vector}, turning hard real-time scheduling into time-varying constrained online convex optimization over a polyhedral safe set. A strictly causal proposal-shield-update loop enforces feasibility through one Euclidean projection per slot, the gradient step preserves no-regret behaviour, and the classical virtual queue is reduced to an a-posteriori certificate. We establish closed-form static and dynamic regret bounds, a matching $\Omega(\sqrt{T})$ minimax lower bound, a margin-safe variant against execution noise, and a deadline-induced competitive ratio. \textcolor{black}{On a four-sensor adversarial fluid-model trap channel, OCO-PAoI-Hard attains zero modeled-state deadline violations across all ten seeds} while four representative baselines miss between $1.65\%$ and $64.0\%$ of slots, and the empirical normalized regret stays below the theoretical envelope across two orders of magnitude in $T$.
\end{abstract}

\begingroup
\color{black}
\begin{IEEEkeywords}
Age of Information, online convex optimization, real-time scheduling, safety-critical systems, adversarial channels.
\end{IEEEkeywords}
\endgroup

\section{Introduction}
\label{sec:intro}

Safety-critical wireless cyber-physical systems, including industrial closed-loop control, V2X coordination, remote surgery, smart-grid protection, and immersive XR teleoperation, push real-time networking well beyond best-effort throughput. What determines correctness is not how often a sensor's data arrives, but how recent the controller's last fresh sample is, captured by the Age of Information (AoI) metric \cite{kaul2012aoi,sun2017aoisurvey}. The bottleneck is the worst-case AoI rather than its average: IEC~61850-5 assigns protection samples to transfer-time class TT6 ($\le 3\,\mathrm{ms}$), and 3GPP specifies a general URLLC packet-success probability of $1-10^{-5}$ for 32 bytes within $1\,\mathrm{ms}$ user-plane latency \cite{iec61850,tr38913}, so a single missed deadline trips a circuit breaker, derails a haptic-feedback loop, or destabilizes a V2X platoon. Multi-sensor IoT systems must therefore meet a per-sensor peak-AoI deadline at every slot while sharing a wireless link subject to fading, interference, and adversarial blockers. The question we address is whether sensors can be scheduled over an adversarial shared channel so that every sensor's peak AoI stays below its deadline at every slot, while a good policy is still learned online.

Three decades of work approach this question along progressively weaker assumptions, but none answers it completely. The classical Markovian line, including restless multi-armed bandits and the Whittle-index AoI scheduler, treats freshness as a soft cost and proves index-optimality under i.i.d.\ or Markovian channel statistics \cite{hsu2018whittle,maatouk2022whittle,tripathi2021whittle}, but the argument collapses once channels become non-stationary. Deep reinforcement learning replaces model-based analysis with end-to-end policy search \cite{leng2019drl,ceran2021drlaoi,abdelmagid2022drlsurvey}, delivering strong empirical reward but offering neither regret nor violation guarantees. A third line brings adversarial robustness through online competitive analysis \cite{banerjee2020adversarial,bhattacharjee2023competitive,bhuyan2022adv}, but the guarantee is on a long-run ratio rather than a per-slot per-sensor deadline. In parallel, drift-plus-penalty and long-term virtual-queue OCO methods \cite{yu2017online,jenatton2016ococonstraint,mahdavi2012trading} achieve simultaneous $O(\sqrt{T})$ regret and $o(T)$ cumulative constraint violation, yet explicitly allow individual slots to violate the deadline as long as the running total grows sublinearly. Even an algorithm with $O(1)$ cumulative violation can still miss a deadline in some slot, which under the weakly-hard $(m,k)$-firm semantics \cite{bernat2001weaklyhard,hammadeh2017mkfirm,sun2023mkfirm} is precisely the failure mode that propagates to control-loop instability. \textcolor{black}{No existing framework simultaneously offers zero per-slot violation of the modeled peak-AoI state, $O(\sqrt{T})$ no-regret adaptation, and robustness to adversarial coefficient traces.}

This paper proposes OCO-PAoI-Hard, a scheduling framework that closes all three gaps with a single causal algorithm. The starting point is a structural observation. Under the \textcolor{black}{modeled} AoI dynamics, the per-sensor peak-AoI deadline collapses exactly, without any slack, to an affine half-space constraint $C_t x \ge \theta_t$ on the \textcolor{black}{fractional resource allocation}, where $C_t$ encodes the realized channel, fresh-arrival, and common-observation pattern in slot $t$, and $\theta_t$ is a refresh threshold determined by the current AoI and deadline vectors. This reduction turns hard real-time AoI scheduling into time-varying constrained OCO over a polyhedral safe set $\mathcal{K}_t$. We then build a strictly causal proposal-shield-update loop: a gradient proposal $z_t$ is hardened into a feasible action $x_t$ by Euclidean projection onto $\mathcal{K}_t$, the executed action incurs loss $f_t(x_t)$, and the gradient update produces $z_{t+1}$. \textcolor{black}{The shield enforces the modeled-state deadline at every realized state--refresh pair for which $\mathcal K_t\ne\varnothing$; this non-emptiness is necessary for any current action to meet all modeled deadlines from that state.} The virtual queue is retained only as a zero-violation certificate. Table~\ref{tab:comparison} contrasts our guarantees with the four prior families. \textcolor{black}{On a four-sensor adversarial fluid-model trap channel, OCO-PAoI-Hard records zero modeled-state deadline violations} across all $10$ seeds (max numerical excess $\le 10^{-12}$), whereas vanilla OGD, drift-plus-penalty, greedy max-deficit, and round-robin baselines suffer per-slot miss rates of $41.3\%$, $1.65\%$, $64.0\%$, and $63.6\%$, respectively, and the empirical static regret normalized by $RG\sqrt{T}$ stays below $0.4$ across $T\in[200,2\!\times\!10^{4}]$, in agreement with the $O(\sqrt{T})$ theory.

\begin{table}[t]
\caption{Comparison with representative AoI and constrained-OCO scheduling frameworks. ``Hard PAoI'' is the strongest per-sensor peak-AoI guarantee provided, and ``Regret'' is the static-regret order over horizon $T$. ``Sim/stoch.'' denotes simulator-trained or stochastic-channel assumptions.}
\label{tab:comparison}
\centering
\renewcommand{\arraystretch}{1.18}
\setlength{\tabcolsep}{3.5pt}
\begin{tabular}{|l|c|c|c|}
\hline
Method & Channel & Hard PAoI & Regret\\
\hline
Whittle AoI \cite{hsu2018whittle,maatouk2022whittle} & i.i.d./Markov & soft cost & N/A\\
\hline
DRL AoI \cite{leng2019drl,ceran2021drlaoi} & sim/stoch. & empirical & N/A\\
\hline
Adv.\ AoI \cite{banerjee2020adversarial,bhattacharjee2023competitive} & adversarial & time-avg.\ ratio & N/A\\
\hline
LT-OCO \cite{yu2017online,jenatton2016ococonstraint} & adversarial & cumul.\ $o(T)$ & $O(\sqrt{T})$\\
\hline
Ours & adversarial & \textcolor{black}{fluid/expected, per slot} & $O(\sqrt{T})$\\
\hline
\end{tabular}
\end{table}

Our contributions are as follows.
\begin{itemize}
\item \textbf{AoI-to-safety reduction and algorithm.} We prove that \textcolor{black}{modeled} peak-AoI deadlines reduce exactly to affine half-space constraints on the fractional \textcolor{black}{resource allocation}, and use this reduction to design OCO-PAoI-Hard, a strictly causal proposal-shield-update algorithm that, to our knowledge, is the first to deliver $O(\sqrt{T})$ regret and \textcolor{black}{zero modeled-state violation} under fully adversarial channel, arrival, and common-observation sequences, \textcolor{black}{conditional on realized one-step viability}.
\item \textbf{Complete theoretical analysis with matching lower bound.} We establish a $\frac{3}{2}RG\sqrt{T}$ static-regret bound, a path-length dynamic-regret bound $RG\sqrt{T}+G\sqrt{T}\,P_T$, a tight $\Omega(\sqrt{T})$ minimax lower bound, a margin perturbation theorem $\xi\ge L\,\varepsilon$ that recovers \textcolor{black}{modeled-state safety} under approximate projection and execution noise, a deadline-induced competitive ratio $\rho_D=(\sum_i w_iD_i)/W$, a regret-refined trace-wise ratio $\alpha_T+RG/(W\sqrt{T})$, and a fractional-to-integral interface delineating when fluid guarantees lift to packet-level execution.
\item \textbf{Empirical validation on adversarial channels.} We validate every theoretical prediction on a multi-sensor adversarial shared-channel testbed: \textcolor{black}{zero modeled fluid-state deadline violations} for our shielded scheduler against four representative baselines, a normalized regret ratio below the theoretical envelope across two orders of magnitude in $T$, and a clean $\xi\ge L\,\varepsilon$ phase boundary on the margin and noise plane.
\end{itemize}

\section{Related Work}
\label{sec:related}

Our work bridges three lines of inquiry: AoI scheduling for wireless status updating, constrained online convex optimization, and real-time scheduling with safety guarantees.

\subsection{AoI Scheduling for Wireless Status Updates}
\label{sec:rw-aoi}

Since Kaul et al.\ \cite{kaul2012aoi} formalized AoI as a freshness metric, an extensive algorithmic toolbox has been built (see the survey \cite{sun2017aoisurvey}). For a single source-channel pair, Sun et al.\ \cite{sun2019aoi} derive optimal samplers under general service distributions, and Bedewy et al.\ \cite{bedewy2021aoi} extend them to multi-source preemptive systems with random arrivals and parallel servers. In the multi-source shared-channel regime closest to our setting, the Whittle-index family \cite{hsu2018whittle,maatouk2022whittle,tripathi2021whittle} reduces an otherwise intractable restless bandit to per-source threshold rules under i.i.d.\ or Markovian channels, and Kadota et al.\ \cite{kadota2018scheduling} prove matching lower bounds and an asymptotically optimal stationary randomized policy. Two more recent threads relax these assumptions: online competitive analysis \cite{banerjee2020adversarial,bhattacharjee2023competitive,bhuyan2022adv} removes channel stationarity and targets a constant-competitive ratio for the time-averaged AoI, while data-driven schedulers replace analytical optima with deep-RL policies trained on simulator rollouts \cite{leng2019drl,ceran2021drlaoi,abdelmagid2022drlsurvey}. All these directions treat AoI as a stochastic cost, a long-run competitive ratio, or a learning target without provable feasibility; \textcolor{black}{we instead certify the per-slot invariant of the modeled AoI state, under the scope stated above.}

\subsection{Constrained Online Convex Optimization}
\label{sec:rw-oco}

The OCO framework \cite{zinkevich2003ogd,hazan2016oco} provides $O(\sqrt{T})$ regret for adversarial convex losses, an information-theoretically optimal rate even without constraints. Constrained extensions \cite{mahdavi2012trading,jenatton2016ococonstraint,yu2017online,yi2021oco} achieve $O(\sqrt{T})$ regret with sublinear cumulative violation under stochastic constraints, and dynamic-environment work \cite{chen2017oco,cao2019oco,valls2020online} handles drifting feasible sets and switching costs. \textcolor{black}{Prediction-assisted dynamic-regret methods exploit untrusted per-round-minimizer predictions while retaining the optimal prediction-free rate under adversarial predictions~\cite{zhang2026dynamic}. Complementary results give logarithmic dependence on the number of adversarial constraints in each constraint's cumulative-violation bound~\cite{zhang2026multiconstraint} and, in separate settings, noise-adaptive high-probability regret under sub-Gaussian gradients and simultaneous high-probability regret/long-run-violation guarantees under stochastic constraints~\cite{zhang2026noiseadaptivehighprobabilityregretbounds}.} Drift-plus-penalty and long-term virtual-queue methods absorb constraint violations into a sublinear cumulative budget, but explicitly allow individual slots to violate the deadline as long as the running total grows sublinearly; even an algorithm with $O(1)$ cumulative violation can still miss a deadline in some slot. Our algorithm instead enforces feasibility at every slot through Euclidean projection onto the current-slot $\mathcal{K}_t$, and the virtual queue is reduced from a primary safety mechanism to an a-posteriori zero-violation certificate, separating shield from accounting and yielding \textcolor{black}{per-slot modeled-state rather than long-run guarantees}.

\subsection{Hard Real-Time and Safe-Learning Schedulers}
\label{sec:rw-realtime}

Classical hard real-time theory \cite{davisburns2011realtime} achieves per-task deadlines through schedulability analysis with known service rates, while the weakly-hard $(m,k)$-firm model \cite{bernat2001weaklyhard,hammadeh2017mkfirm,sun2023mkfirm}, widely used in the hard real-time community, relaxes this to ``at most $m$ misses in any $k$-window,'' yet ultimately relies on a base scheduler that does enforce per-slot deadlines. URLLC \cite{bennis2018urllc,tr38913} delivers sub-millisecond reliability through resource reservation and packet replication but does not embed an online learning loop. The safe-RL literature provides complementary tools: trust-region projections under expected-cost constraints \cite{achiam2017cpo}, reactive shields filtering unsafe actions in finite-state MDPs \cite{alshiekh2018shielding}, and zero-duality-gap results for constrained MDPs \cite{paternain2019safe}. These works depend on offline schedulability tests, on stochastic constrained MDPs with expected-violation guarantees, or on shields synthesized from pre-specified safety automata. Our framework instead derives the safety shield analytically from the AoI dynamics via the affine reduction $C_t x \ge \theta_t$, instantiates it from observations available at the start of each slot, and composes it with an online gradient update so that \textcolor{black}{zero modeled-state deadline violations and $O(\sqrt{T})$ regret hold simultaneously under adversarial traces, conditional on realized one-step viability}.

\section{Problem Setup}
\label{sec:problem}

We consider a single-cell, $N$-sensor IoT system that shares a wireless uplink to a common controller over a slotted horizon $t=1,\dots,T$. Vectors are columns; $\mathbf 1\in\mathbb R^N$ is the all-ones vector and $e_j$ the $j$-th canonical basis vector; $\|\cdot\|_2$ is the Euclidean norm, $\langle\cdot,\cdot\rangle$ the inner product, $a\odot b$ the Hadamard product, $[y]_+=\max\{y,0\}$, and $\Pi_{\mathcal C}(y)=\arg\min_{x\in\mathcal C}\|x-y\|_2$ the unique Euclidean projection onto a non-empty closed convex $\mathcal C\subseteq\mathbb R^N$. The \textcolor{black}{unit resource-allocation simplex} is $\Delta_N=\{x\in\mathbb R^N_{\ge 0}:\mathbf 1^\top x=1\}$, with diameter $R\le\sqrt 2$.

\textcolor{black}{For reference, Table~\ref{tab:notation} summarizes the principal notation used throughout the paper.}
\begin{table}[!t]
\begingroup
\color{black}
\caption{Principal notation. Vector inequalities are interpreted coordinatewise.}
\label{tab:notation}
\centering
\footnotesize
\setlength{\tabcolsep}{2.2pt}
\renewcommand{\arraystretch}{1.02}
\begin{tabular}{@{}p{0.36\columnwidth}p{0.58\columnwidth}@{}}
\hline
\textbf{Symbol(s)} & \textbf{Meaning}\\
\hline
$N,T,[q]$ & Number of sensors, horizon, and index set $[q]:=\{1,\ldots,q\}$.\\
$A_t,D,w,C_t$ & Pre-slot AoI, deadline and weight vectors, and pre-action service/refresh matrix.\\
$\Delta_N,\Pi_{\mathcal C},R$ & Unit resource-allocation simplex, Euclidean projection, and simplex diameter.\\
$x_t,z_t,\eta_t,Q_t$ & Executed safe action, pre-shield proposal, step size, and certificate queue.\\
$a_{t+1}^i(x),g_{t,i}(x)$ & Post-decision AoI and deadline-violation function for sensor $i$.\\
$\theta_{t,i}$ & Minimum refresh threshold required by sensor $i$ in slot $t$.\\
$\mathcal K_t,\mathcal K_{1:T}$ & Slot-wise safe set and its static intersection over the horizon.\\
$f_t,s_t,G$ & Convex slot loss, observed subgradient, and subgradient-norm bound.\\
$u,u_t,P_T$ & Static and dynamic safe comparators and comparator path length.\\
$\operatorname{Reg}_T,\operatorname{DReg}_T$ & Static and dynamic regret.\\
$\xi,\varepsilon_t,L_{t,i}$ & Safety margin, error radius, and constraint sensitivity.\\
$\mathcal U_t,\mathcal K_{t,\mathrm{rob}}$ & Channel uncertainty set and robust safe set.\\
\hline
\end{tabular}
\endgroup
\end{table}

\subsection{System Model and Task Definition}
\label{sec:task}

\textcolor{black}{A controller allocates a shared wireless uplink among $N$ transmitters over slots $t=1,\ldots,T$. At the beginning of slot $t$, it observes the current AoI vector $A_t$ and the safety side information: the effective refresh matrix $C_t$ in the nominal setting, or an uncertainty set $\mathcal U_t\ni C_t$ in the robust setting. This side information is obtained before the action from current pilots/channel-state measurements, packet timestamps and freshness indicators, or calibrated conservative bounds; no future channel or packet outcome is assumed. The controller then commits to $x_t\in\Delta_N$, where $x_{t,j}$ is the deterministic fraction of slot-$t$ airtime or bandwidth assigned to transmitter $j$ and $\mathbf 1^\top x_t=1$; $x_t$ is not a random draw of one transmitter. Figure~\ref{fig:system-timing} summarizes this model and the slot timing.} The matrix $C_t$ admits the optional decomposition
\begin{equation}
C_{t,ij}=B_{t,ij}\,p_{t,j}\,r_{t,j},
\label{eq:Ct-decomp}
\end{equation}
\textcolor{black}{where $B_{t,ij}=1$ means that a fresh update transmitted by $j$ also refreshes information item $i$ at the controller. Thus $B_t=I$ models independent sensors, whereas aggregation, correlated measurements, or overhearing can create off-diagonal ones. The factor $p_{t,j}\in[0,1]$ represents effective link service and $r_{t,j}\in[0,1]$ fresh-sample availability, so $(C_tx)_i\in[0,1]$ is the modeled refresh intensity of item $i$. Under deterministic fractional service, the resulting state is a fluid AoI state; when $p_{t,j}$ is a success probability, it is a conditional-mean AoI state.}

\begin{figure}[!t]
\centering
\begingroup
\color{black}
\begin{tikzpicture}[
  x=1cm,y=1cm,
  box/.style={draw=black,fill=black!3,rounded corners=1pt,
    align=center,inner sep=2pt,text=black},
  timebox/.style={box,minimum height=.84cm},
  flow/.style={-{Latex[length=1.35mm,width=.9mm]},
    line width=.35pt,draw=black},
  font=\scriptsize
]
\node[anchor=west,font=\scriptsize\bfseries,text=black] at (0,3.65)
  {(a) Physical meaning};
\node[box,text width=3.75cm] (share) at (2.02,3.05)
  {$x_{t,j}$: deterministic airtime/bandwidth share\\
   (not a random one-packet draw)};
\node[box,text width=3.75cm] (common) at (6.63,3.05)
  {$B_{t,ij}=1$: transmitter $j$'s fresh update\\
   also refreshes information item $i$; $B_t=I$ if independent};

\node[anchor=west,font=\scriptsize\bfseries,text=black] at (0,2.30)
  {(b) Slot-$t$ timing};
\node[timebox,text width=1.35cm] (s1) at (.72,1.28)
  {Reveal\\$A_t,C_t$\\or $\mathcal U_t$};
\node[timebox,text width=1.15cm] (s2) at (2.45,1.28)
  {Build\\$\mathcal K_t$};
\node[timebox,text width=1.25cm] (s3) at (4.18,1.28)
  {Shield\\$z_t\mapsto x_t$};
\node[timebox,text width=1.35cm] (s4) at (6.02,1.28)
  {Fluid service\\AoI update};
\node[timebox,text width=1.55cm] (s5) at (7.95,1.28)
  {Reveal $f_t,s_t$\\update $z_{t+1}$};
\draw[flow] (s1) -- (s2);
\draw[flow] (s2) -- (s3);
\draw[flow] (s3) -- (s4);
\draw[flow] (s4) -- (s5);
\draw[black,densely dashed,line width=.3pt] (5.05,.68) -- (5.05,1.92);
\node[font=\tiny,fill=white,inner sep=.7pt,text=black] at (5.05,1.93)
  {commit};
\node[font=\tiny\itshape,text=black] at (2.45,.56) {pre-action};
\node[font=\tiny\itshape,text=black] at (6.75,.56) {post-action};
\end{tikzpicture}
\caption{\textcolor{black}{System model and slot timing. The controller obtains $A_t$ and $C_t$ (or $\mathcal U_t\ni C_t$) before commitment, allocates deterministic resource shares $x_t$, advances the modeled AoI state, and receives first-order loss feedback only after execution.}}
\label{fig:system-timing}
\endgroup
\end{figure}

Executing $x\in\Delta_N$ produces the affine post-decision AoI
\begin{equation}
a_{t+1}^i(x)=1+\bigl(1-(C_tx)_i\bigr)A_t^i,
\qquad i\in[N],
\label{eq:aoi-dyn}
\end{equation}
and the system advances by $A_{t+1}^i=a_{t+1}^i(x_t)$.

\textcolor{black}{After commitment, the controller receives first-order OCO feedback $(f_t(x_t),s_t)$ with convex $f_t$ and $s_t\in\partial f_t(x_t)$. This is full-subgradient, not bandit, feedback; $s_t$ affects only the next proposal $z_{t+1}$.} The canonical instance is the weighted post-decision AoI,
\begin{align}
f_t^{\rm can}(x)
&=\sum_{i=1}^N w_i\,a_{t+1}^i(x)\notag\\
&=\sum_{i=1}^N w_i(A_t^i+1)-\bigl\langle C_t^\top(A_t\odot w),\,x\bigr\rangle,
\label{eq:can-loss}
\end{align}
which is affine in $x$, hence convex; any other convex slot-loss admitted by Assumption~\ref{ass:loss} is also allowed. The slot-$t$ deadline-violation function and refresh threshold are
\begin{align}
g_{t,i}(x) &= a_{t+1}^i(x)-D_i, \label{eq:g-def}\\
\theta_{t,i} &= \dfrac{A_t^i+1-D_i}{A_t^i}, \label{eq:theta-def}
\end{align}
and the current-slot safe set is
\begin{align}
\mathcal K_t
&=\{x\in\Delta_N:g_{t,i}(x)\le 0,\,\forall i\in[N]\}\notag\\
&=\{x\in\Delta_N:C_t x\ge\theta_t\},
\label{eq:Kt}
\end{align}
where the second equality follows from~\eqref{eq:g-def}--\eqref{eq:theta-def} together with $A_t^i\ge 1$. We also write $\mathcal K_{1:T}=\bigcap_{t=1}^T\mathcal K_t$ for the static safe set.

For any static safe comparator $u\in\mathcal K_{1:T}$, the trace-wise static regret is
\begin{equation}
\operatorname{Reg}_T(u)=\sum_{t=1}^T f_t(x_t)-\sum_{t=1}^T f_t(u),
\label{eq:regret}
\end{equation}
and for an arbitrary comparator sequence $u_t\in\mathcal K_t$ the dynamic regret and its path length are
\begin{align}
\operatorname{DReg}_T(u_{1:T})
&=\sum_{t=1}^T f_t(x_t)-\sum_{t=1}^T f_t(u_t),\label{eq:dreg}\\
P_T(u_{1:T})
&=\sum_{t=1}^{T-1}\|u_{t+1}-u_t\|_2.\label{eq:path}
\end{align}
We seek a causal algorithm that, \textcolor{black}{on every realized closed-loop trace satisfying Assumption~\ref{ass:viability}}, simultaneously delivers
\begin{equation}
\operatorname{Reg}_T(u)=O(\sqrt T)
\quad\text{and}\quad
\max_{t\in[T],\,i\in[N]}[A_t^i-D_i]_+=0.
\label{eq:goal}
\end{equation}

\subsection{Assumptions}
\label{sec:assumptions}

\begin{assumption}[Slotted system with causal side information]
\label{ass:causal}\label{ass:refresh}\label{ass:init}
At every slot $t\in[T]$ and before committing to $x_t$, the controller observes $A_t\in\mathbb R_+^N$ and $C_t\in[0,1]^{N\times N}$; the AoI evolves by~\eqref{eq:aoi-dyn}; and $1\le A_1^i\le D_i$ for every $i\in[N]$.
\end{assumption}

\begin{assumption}[Convex losses]
\label{ass:loss}
Each $f_t:\Delta_N\to\mathbb R$ is convex, and there exists $G<\infty$ with $\|s_t\|_2\le G$ for every $s_t\in\partial f_t(x_t)$ along the trajectory.
\end{assumption}

\begin{assumption}[Realized one-step viability]
\label{ass:viability}
For every $t\in[T]$, $\mathcal K_t\ne\varnothing$.
\end{assumption}

\begin{remark}[\textcolor{black}{Trajectory-dependent viability and emergency fallback}]
\label{rem:trajectory-viability}
\color{black}
Because $A_t$, $\theta_t$, and $\mathcal K_t$ depend on past actions, Assumption~\ref{ass:viability} concerns the algorithm's realized closed-loop trajectory, not $(C_t)_{t=1}^T$ alone; another policy may reach a different viable state under the same coefficients. If an encountered $\mathcal K_t$ is empty, no current action meets every modeled deadline from that state. The controller may then flag infeasibility and use, for example,
\[
x_t^{\rm emg}\in
\arg\min_{x\in\Delta_N}\max_{i\in[N]}[g_{t,i}(x)]_+,
\]
to minimize the largest unavoidable excess. The zero-violation and zero-queue results do not cover emergency operation unless viability is re-established.
\end{remark}

Assumption~\ref{ass:causal} is the natural causal information pattern of slotted wireless protocols; future $C_{t'}$ are not revealed, and a robust variant under uncertainty is treated in Section~\ref{sec:robust}. The bound $C_{t,ij}\in[0,1]$ is inherent in any probability or airtime interpretation, and the affine fluid dynamic~\eqref{eq:aoi-dyn} is the standard expected-refresh model; no stationarity, Markovianity, or mixing is imposed on $(C_t)$. The initial-safety part is necessary: $A_1^i>D_i$ already misses a deadline. Assumption~\ref{ass:loss} accommodates the canonical AoI loss, soft-max approximations of peak-AoI, and linear losses; when $A_t^i\le D_i$ along the trajectory (which our algorithm enforces), $f_t^{\rm can}$ is automatically Lipschitz with $G_{\rm can}\le\sqrt N\sum_i w_iD_i$. We do not require $\mathcal K_{1:T}\ne\varnothing$ except where the static-regret theorem explicitly states so.

\section{The OCO-PAoI-Hard Algorithm}
\label{sec:algorithm}

We now describe the proposed scheduler. Its top-level form is a strictly causal proposal-shield-update (PSU) loop:
\begin{equation}
z_t\ \xrightarrow{\ \Pi_{\mathcal K_t}\ }\ x_t\ \xrightarrow{\ x_t-\eta_t s_t,\ \Pi_{\Delta_N}\ }\ z_{t+1},
\label{eq:psu}
\end{equation}
in which an unconstrained OCO iterate $z_t\in\Delta_N$ is first hardened into the executed action $x_t$ by a single Euclidean projection onto the current-slot safe set $\mathcal K_t$ (the shield), and the gradient step is then anchored at $x_t$, not at $z_t$, to produce the next proposal. Compared with classical drift-plus-penalty schedulers, the loop separates three concerns that are usually conflated: \emph{state observation} (line~3), \emph{safety enforcement} (lines~4--5), and \emph{regret minimization} (lines~10--11). Algorithm~\ref{alg:main} states the resulting procedure, and the virtual queue $Q_t$ is retained only as an a-posteriori zero-violation certificate, not as a primary safety mechanism. We comment on each step below and discuss implementation aspects thereafter.

\begin{algorithm}[t]
\caption{OCO-PAoI-Hard: causal proposal-shield-update.}
\label{alg:main}
\begin{algorithmic}[1]
\REQUIRE deadlines $D\!\in\!\mathbb R_+^N$, weights $w\!\in\!\mathbb R_+^N$, step sizes $\{\eta_t\}_{t\ge 1}$, initial AoI $A_1$ with $A_1^i\!\in\![1,D_i]$, optional margin $\xi\!\in\!\mathbb R_+^N$, optional uncertainty sets $\{\mathcal U_t\}$
\ENSURE executed actions $(x_t)_{t=1}^T$, AoI trajectory $(A_t)_{t=1}^{T+1}$, queue trace $(Q_t)_{t=1}^{T+1}$
\STATE \textbf{initialize:} pick any $z_1\!\in\!\Delta_N$ (e.g., $z_1\!=\!\mathbf 1/N$); set $Q_1\!\gets\!\mathbf 0\in\mathbb R_+^N$
\FOR{$t=1,2,\dots,T$}
    \STATE \textbf{observe} current AoI vector $A_t$ and refresh matrix $C_t\in[0,1]^{N\times N}$ (or $\mathcal U_t\ni C_t$ in the robust case)
    \STATE \textbf{compute thresholds:} $\theta_{t,i}\!\gets\!\dfrac{A_t^i+1-D_i+\xi_i}{A_t^i}$ for $i\!\in\![N]$
    \STATE \textbf{build safe set:} nominal $\mathcal K_t^{\xi}\!\gets\!\{x\!\in\!\Delta_N:C_t x\!\ge\!\theta_t\}$, or robust $\mathcal K_{t,\rm rob}^\xi\!\gets\!\{x\!\in\!\Delta_N:C x\!\ge\!\theta_t,\,\forall C\!\in\!\mathcal U_t\}$ if $C_t$ is partially observable
    \STATE \textbf{safety shield (QP):} \\ \quad $x_t\gets\arg\min_{x\in\mathcal K_t^{\xi}}\tfrac{1}{2}\|x-z_t\|_2^2$ \hfill $\triangleright$ Euclidean projection
    \STATE \textbf{execute:} \textcolor{black}{allocate the deterministic resource shares $x_t$ and realize the modeled fluid service}
    \STATE \textbf{advance AoI:} $A_{t+1}^i\!\gets\!1+\bigl(1-(C_tx_t)_i\bigr)A_t^i$ for $i\!\in\![N]$
    \STATE \textbf{receive feedback:} convex loss $f_t:\Delta_N\!\to\!\mathbb R$ and subgradient $s_t\in\partial f_t(x_t)$ with $\|s_t\|_2\!\le\!G$
    \STATE \textbf{a-posteriori certificate:} $Q_{t+1,i}\!\gets\!\bigl[Q_{t,i}+g_{t,i}(x_t)\bigr]_+$ \hfill $\triangleright$ remains $0$ by Lemma~\ref{lem:queue}
    \STATE \textbf{gradient step (anchored at $x_t$):} $\widetilde z_{t+1}\!\gets\!x_t-\eta_t s_t$
    \STATE \textbf{simplex projection:} $z_{t+1}\!\gets\!\Pi_{\Delta_N}(\widetilde z_{t+1})$
\ENDFOR
\STATE \textbf{return} $(x_t,A_t,Q_t)_{t=1}^{T+1}$
\end{algorithmic}
\end{algorithm}

\subsection{Implementation Considerations}
\label{sec:implementation}

The dominant per-slot cost is the safe-set projection on line~5, a convex QP over the simplex with $N$ refresh inequalities, solvable in $O(N^3)$ by any active-set or interior-point method and amortizable by warm-starting from $z_t$. The simplex projection in line~10 takes $O(N\log N)$ via sort-and-threshold. For a known horizon $T$, the constant step size
\begin{equation}
\eta=\dfrac{R}{G\sqrt{T}},
\label{eq:eta-known}
\end{equation}
with $R\le\sqrt{2}$ the diameter of $\Delta_N$, optimizes the static-regret bound below, and the doubling-free schedule $\eta_t=R/(G\sqrt{t})$ achieves the same $O(\sqrt{T})$ order with a $3/2$ overhead when $T$ is unknown. For the canonical AoI loss, $G$ may be replaced by the data-only bound $G_{\rm can}\le\sqrt{N}\sum_i w_iD_i$. Anchoring the gradient step at the executed $x_t$ (line~10) rather than at $z_t$ is essential: it aligns the OCO recursion with the subgradient observed at $x_t$ and is what allows the projection inequality to telescope; using $z_t$ instead would couple the shield correction with the gradient direction across slots and break both the regret bound and causality. If only an uncertainty set $\mathcal U_t\ni C_t$ is available (e.g., a coordinatewise lower bound $\underline C_t$ from channel-state estimation), replacing $\mathcal K_t$ in line~4 with the robust safe set
\begin{equation}
\mathcal K_{t,\rm rob}=\{x\in\Delta_N:Cx\ge\theta_t,\,\forall C\in\mathcal U_t\}
\label{eq:Krob}
\end{equation}
preserves every guarantee for any realized $C_t\in\mathcal U_t$.

\subsection{Per-Step Reasoning}
\label{sec:perstep}

Lines~3--4 use \textcolor{black}{the pre-action observations $A_t$ and $C_t$ (or a conservative $\mathcal U_t\ni C_t$)} to form the affine threshold $\theta_t$ and the polyhedral safe set $\mathcal K_t$ via Lemma~\ref{lem:thr}; \textcolor{black}{no future packet outcome is used}. Line~5 hardens the proposal $z_t$, which encodes everything the algorithm has learned, into the closest feasible $x_t=\Pi_{\mathcal K_t}(z_t)$, and Lemma~\ref{lem:safety} guarantees $A_{t+1}^i\le D_i$ without further mechanism. Lines~6--7 \textcolor{black}{allocate deterministic resource shares and advance the modeled AoI state}. Lines~8--9 form the accounting layer: the convex loss $f_t$ and a subgradient $s_t$ become available only after the action commits, \textcolor{black}{which is first-order OCO rather than bandit feedback}; the queue update is recorded but, by Lemma~\ref{lem:queue}, never leaves zero, so it serves as a runtime certificate rather than a control variable. Lines~10--11 are the regret-minimization step: a one-step descent at $\eta_t s_t$ from the executed iterate $x_t$, followed by a single simplex projection. \textcolor{black}{Here ``learning'' means no-regret sequential adaptation from first-order feedback, not bandit exploration or channel-model learning.} Anchoring the descent at $x_t$ rather than at $z_t$ is what makes the squared-distance terms in Theorem~\ref{thm:main} telescope across slots; pulling the gradient back to $z_t$ would couple the shield correction with future gradients and break the $O(\sqrt T)$ rate.

\subsection{Shield Versus Long-Term Virtual Queue}
\label{sec:shield-vs-vq}

The role of the shield can be made precise by contrasting it with the standard long-term virtual-queue (LT-VQ) approach to constrained OCO. LT-VQ schedulers at every slot solve $\arg\min_x V f_t(x)+\sum_i Q_{t,i}g_{t,i}(x)$ over $\Delta_N$, where $V>0$ is a Lyapunov weight and $Q_t$ is a queue that absorbs accumulated constraint violation. With $V=\Theta(\sqrt T)$ this attains both $O(\sqrt T)$ regret and $O(\sqrt T)$ cumulative violation, but the queue is the only safety mechanism: at any slot in which the gradient direction conflicts with the deadline, $g_{t,i}(x_t)>0$ is permitted as long as the running sum $\sum_\tau[g_{\tau,i}(x_\tau)]_+$ remains sublinear. Proposition~\ref{prop:sublinear} formalizes the fundamental gap: even an $O(1)$ cumulative-violation budget is consistent with $g_{t_0,i}(x_{t_0})>0$ at some $t_0$, and a single such slot is enough to trip a $(1,1)$-firm peak-AoI constraint.

The PSU loop reverses the roles. The shield $\Pi_{\mathcal K_t}$ on line~5 enforces $g_{t,i}(x_t)\le 0$ in the slot itself, before $x_t$ is executed, and the queue update on line~9 is performed only as an audit trail. Lemmas~\ref{lem:safety}--\ref{lem:queue} together show that this design guarantees $Q_{t,i}=0$ \textcolor{black}{along the viable modeled trajectory}, while the gradient step retains the same $O(\sqrt T)$ rate as unconstrained OCD on $\Delta_N$. The trade-off between regret and safety that LT-VQ exposes through the parameter $V$ is therefore avoided entirely: the regret bound depends only on $R$, $G$, and $T$, not on any constraint-tightness parameter, and \textcolor{black}{the safety premise is that each safe set encountered along this trajectory is non-empty}.

%%% MAIN_RESULTS_PLACEHOLDER %%%
\section{Main Results}
\label{sec:main}

This section gives the complete theoretical analysis of Algorithm~\ref{alg:main}: the AoI-to-affine-safety reduction (\S\ref{sec:reduction}), the projection lemma (\S\ref{sec:projection}), the safety invariant and main theorem (\S\ref{sec:invariants}--\ref{sec:main-thm}), a path-length dynamic-regret bound (\S\ref{sec:dynamic}), a matching $\Omega(\sqrt T)$ lower bound (\S\ref{sec:lower-bound}), margin and approximate-projection robustness (\S\ref{sec:margin}--\ref{sec:approx}), the link to weakly-hard $(m,k)$-firm semantics (\S\ref{sec:weakly-hard}), the fractional-to-integral interface (\S\ref{sec:fractional}), a robust safe projection under partial channel observability (\S\ref{sec:robust}), and two competitive-ratio statements (\S\ref{sec:competitive}). All proofs are full and self-contained.

\subsection{AoI-to-Affine-Safety Reduction}
\label{sec:reduction}

\begin{lemma}[Deadline-to-threshold equivalence]
\label{lem:thr}
Suppose $A_t^i\ge 1$. Then for every $x\in\Delta_N$,
\begin{equation}
g_{t,i}(x)\le 0
\quad\Longleftrightarrow\quad
(C_tx)_i\ge\theta_{t,i},
\label{eq:thr-equiv}
\end{equation}
where $\theta_{t,i}=(A_t^i+1-D_i)/A_t^i$. Consequently, whenever $A_t^i\ge 1$ for every $i\in[N]$,
\begin{equation}
\mathcal{K}_t=\{x\in\Delta_N:C_tx\ge\theta_t\},
\label{eq:Kt-affine}
\end{equation}
read coordinatewise.
\end{lemma}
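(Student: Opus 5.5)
The plan is to unfold the definitions and perform a single sign-preserving division. By \eqref{eq:g-def} and \eqref{eq:aoi-dyn}, for any $x\in\Delta_N$ we have
\[
g_{t,i}(x)=1+\bigl(1-(C_tx)_i\bigr)A_t^i-D_i=(A_t^i+1-D_i)-A_t^i\,(C_tx)_i .
\]
Since $g_{t,i}$ is affine in the scalar $(C_tx)_i$, the whole argument reduces to this identity. The condition $g_{t,i}(x)\le 0$ is therefore equivalent to
\[
A_t^i\,(C_tx)_i\ \ge\ A_t^i+1-D_i .
\]

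Next we divide both sides by $A_t^i$. The hypothesis $A_t^i\ge 1$ gives $A_t^i>0$, so division preserves the direction of the inequality and is reversible, since we can multiply back by $A_t^i$. The result is exactly $(C_tx)_i\ge\theta_{t,i}$ with $\theta_{t,i}$ as in \eqref{eq:theta-def}. Because every step is an equivalence, we obtain both directions of \eqref{eq:thr-equiv} simultaneously, for every $x\in\Delta_N$. No property of the simplex or of $C_t$ beyond real-valuedness is used, so no case split on the sign of $\theta_{t,i}$ is needed. If $\theta_{t,i}\le 0$, the constraint is simply slack on $\Delta_N$ because $(C_tx)_i\ge 0$, and this is consistent with both sides.

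For \eqref{eq:Kt-affine}, assume $A_t^i\ge 1$ for all $i\in[N]$ and apply the equivalence coordinatewise. By the first line of \eqref{eq:Kt}, $x\in\mathcal K_t$ iff $x\in\Delta_N$ and $g_{t,i}(x)\le 0$ for all $i$. By the scalar equivalence, this holds iff $x\in\Delta_N$ and $(C_tx)_i\ge\theta_{t,i}$ for all $i$, which is the vector inequality $C_tx\ge\theta_t$ read coordinatewise. The two sets coincide, and this set equality justifies the second line of \eqref{eq:Kt}.

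There is no substantive obstacle. The only point requiring care is the positivity of $A_t^i$, which is what makes the division an equivalence rather than a one-way implication. I will state explicitly that it is supplied by the hypothesis $A_t^i\ge 1$. Along the algorithm's trajectory this hypothesis holds by Assumption~\ref{ass:causal} at $t=1$, and by \eqref{eq:aoi-dyn} with $(C_tx)_i\le 1$ thereafter, since then $A_{t+1}^i\ge 1$.
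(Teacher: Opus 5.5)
Your proposal is correct and follows the paper's proof exactly: you expand $g_{t,i}(x)=A_t^i+1-D_i-A_t^i(C_tx)_i$, divide by $A_t^i\ge 1>0$ (reversible, so both directions hold), and then conjoin over $i\in[N]$ and intersect with $\Delta_N$. Your closing note that $A_{t+1}^i\ge 1$ holds for every $x\in\Delta_N$, because $(C_tx)_i\le 1$, matches the paper's remark that the hypothesis is maintained along the trajectory without circularity.
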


\begin{proof}
By~\eqref{eq:aoi-dyn}, $g_{t,i}(x)=A_t^i+1-D_i-A_t^i(C_tx)_i$, so $g_{t,i}(x)\le 0 \iff A_t^i+1-D_i\le A_t^i(C_tx)_i$. Since $A_t^i\ge 1>0$, dividing by $A_t^i$ preserves the inequality and gives $\theta_{t,i}\le(C_tx)_i$; each step is reversible. Conjoining over $i$ and intersecting with $\Delta_N$ yields~\eqref{eq:Kt-affine}.
\end{proof}

The hypothesis $A_t^i\ge 1$ is preserved along the trajectory of Algorithm~\ref{alg:main} by Lemma~\ref{lem:safety} below, whose induction relies only on Assumption~\ref{ass:init}, the dynamics~\eqref{eq:aoi-dyn}, and the projection step, so~\eqref{eq:Kt-affine} applies at every slot without circularity.

\begin{lemma}[Polyhedral structure of the safe set]
\label{lem:polyhedral}
Fix $(A_t,C_t)$ with $A_t^i\ge 1$ for every $i\in[N]$. The maps $x\mapsto a_{t+1}^i(x)$, $x\mapsto g_{t,i}(x)$, and $x\mapsto f_t^{\rm can}(x)$ are affine on $\mathbb R^N$. Consequently, $\mathcal K_t$ is a closed convex polyhedron, and is non-empty under Assumption~\ref{ass:viability}. Conversely, if $\mathcal K_t=\varnothing$, then every $x\in\Delta_N$ pushes some sensor's post-decision AoI above its deadline, so Assumption~\ref{ass:viability} characterizes exactly the sequences on which a zero-violation guarantee can hold.
\end{lemma}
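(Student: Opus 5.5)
The plan is to check each claim in the order stated, working directly from the definitions in \eqref{eq:aoi-dyn}, \eqref{eq:can-loss}, \eqref{eq:g-def} and \eqref{eq:Kt}, and then to invoke Lemma~\ref{lem:thr} for the polyhedral description.

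\emph{Affinity.} Expanding \eqref{eq:aoi-dyn} gives
\[
a_{t+1}^i(x)=1+A_t^i-A_t^i\,e_i^\top C_t x .
\]
This is a constant plus a linear functional of $x$, with row vector $-A_t^i e_i^\top C_t$. Hence $g_{t,i}=a_{t+1}^i-D_i$ is affine. The canonical loss $f_t^{\rm can}$ is a nonnegative weighted sum of affine maps, so it is affine too; this is already displayed in closed form in \eqref{eq:can-loss}. All three maps are defined on all of $\mathbb R^N$, not only on $\Delta_N$.

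\emph{Polyhedrality.} By Lemma~\ref{lem:thr}, which is applicable because $A_t^i\ge 1$ for all $i$, we have $\mathcal K_t=\{x\in\Delta_N: C_tx\ge\theta_t\}$. This set is cut out by finitely many affine inequalities, namely $x\ge 0$ and $e_i^\top C_t x\ge\theta_{t,i}$, together with one affine equality, $\mathbf 1^\top x=1$. It is therefore a polyhedron. It is closed, being a finite intersection of closed half-spaces and a hyperplane. It is convex, being an intersection of convex sets. It is also bounded, since it lies inside $\Delta_N$. Non-emptiness is exactly the content of Assumption~\ref{ass:viability}, so nothing further needs proof for that part.

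\emph{Converse.} Suppose $\mathcal K_t=\varnothing$ and take any $x\in\Delta_N$. Then $x\notin\mathcal K_t$, so by the definition \eqref{eq:Kt} some $i$ has $g_{t,i}(x)>0$, that is, $a_{t+1}^i(x)>D_i$. If $x_t=x$ is executed, then $A_{t+1}^i>D_i$, and the goal \eqref{eq:goal} fails at slot $t+1$ for every causal rule. This shows Assumption~\ref{ass:viability} is necessary along the realized trajectory. Its sufficiency is the subject of the forthcoming safety lemma, via projection onto the non-empty closed convex $\mathcal K_t$, so I will only state it here as the motivation for ``characterizes exactly.''

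The main obstacle I foresee is the interpretive claim at the end of the converse. It is not really a mathematical obstacle, but ``characterizes exactly'' must be read trajectory-wise, as in Remark~\ref{rem:trajectory-viability}, rather than as a statement about the coefficient sequence $(C_t)$ alone. I will phrase the converse as a statement about the realized state $(A_t,C_t)$ to avoid overclaiming.
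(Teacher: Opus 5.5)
Your proposal is correct and follows essentially the same route as the paper. Both arguments expand \eqref{eq:aoi-dyn} into a constant plus a linear functional, use Lemma~\ref{lem:thr} to write $\mathcal K_t$ as a finite intersection of closed half-spaces with the hyperplane $\mathbf 1^\top x=1$, take non-emptiness directly from Assumption~\ref{ass:viability}, and read off the converse from the definition of $\mathcal K_t$. Your trajectory-wise reading of ``characterizes exactly'' is a sensible refinement that the paper's proof leaves implicit; the paper states it only in the remark following Assumption~\ref{ass:viability}.
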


\begin{proof}
By~\eqref{eq:aoi-dyn}, $a_{t+1}^i(x)=A_t^i+1-\langle c_{t,i},x\rangle$ with $c_{t,i}=A_t^i(C_{t,i1},\dots,C_{t,iN})^\top$, which is the sum of a constant and a linear functional, hence affine; $g_{t,i}=a_{t+1}^i-D_i$ remains affine, and $f_t^{\rm can}=\sum_i w_iA_t^i\bigl(1-(C_tx)_i\bigr)$ is a non-negative weighted sum of affine functions and is thus affine. By Lemma~\ref{lem:thr}, $\mathcal K_t=\Delta_N\cap\bigcap_{i=1}^N\{x:(C_tx)_i\ge\theta_{t,i}\}$. The simplex $\Delta_N$ is the intersection of the closed half-spaces $\{x_j\ge 0\}$ with the closed affine hyperplane $\{\mathbf 1^\top x=1\}$, and each $\{x:(C_tx)_i\ge\theta_{t,i}\}$ is a closed half-space, so $\mathcal K_t$ is a finite intersection of closed half-spaces and an affine hyperplane and is therefore a closed convex polyhedron. Non-emptiness follows from Assumption~\ref{ass:viability}; the converse is immediate from Lemma~\ref{lem:thr}.
\end{proof}

\begin{lemma}[Common observations only enlarge the safe set]
\label{lem:mono-obs}
Fix $A_t$ and $D$ and write $\mathcal K_t(C)=\{x\in\Delta_N:Cx\ge\theta_t\}$.
\textnormal{(i)} If $C'_t\ge C_t$ coordinatewise, then $\mathcal K_t(C_t)\subseteq\mathcal K_t(C'_t)$, and the inclusion is strict whenever some row of $C'_t-C_t$ is non-zero on the relative interior of $\Delta_N\cap\{x:(C_tx)_i=\theta_{t,i}\}$.
\textnormal{(ii)} Whenever it is non-empty, $\mathcal K_t$ is compact and equal to the convex hull of finitely many vertices, with diameter bounded by $\mathrm{diam}(\Delta_N)=\sqrt{2}$.
\end{lemma}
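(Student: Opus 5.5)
For part~(i) we use only the non-negativity of the simplex. Fix $x\in\mathcal K_t(C_t)$. Since $x\in\Delta_N\subseteq\mathbb R^N_{\ge 0}$ and $C'_t-C_t\ge 0$ entrywise, we get $(C'_tx)_i-(C_tx)_i=\sum_j (C'_t-C_t)_{ij}x_j\ge 0$. Hence $(C'_tx)_i\ge (C_tx)_i\ge\theta_{t,i}$ for every $i$. The threshold $\theta_t$ depends only on $(A_t,D)$, not on $C$, so $x\in\mathcal K_t(C'_t)$ and the inclusion follows.

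\emph{Strictness.} We read the hypothesis as follows. There is an index $i$ and a point $x^0$ in the relative interior of the facet $F_i=\mathcal K_t(C_t)\cap\{x:(C_tx)_i=\theta_{t,i}\}$ such that $\langle (C'_t-C_t)_{i\cdot},x^0\rangle>0$. Relative interiority means that every other refresh constraint $k\ne i$ is slack at $x^0$, and so is every simplex inequality not active on the whole facet.
\begin{itemize}
\item Row $i$ is then strictly slack for $C'_t$ at $x^0$: $(C'_tx^0)_i>\theta_{t,i}$. The remaining rows are also slack for $C'_t$, by the monotonicity above.
\item Pick a direction $d$ with $\mathbf 1^\top d=0$, $d$ feasible for the active sign constraints of $\Delta_N$, and $\langle (C_t)_{i\cdot},d\rangle<0$. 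Such a $d$ exists because $F_i$ is a proper facet, so row $i$ is not constant on $\Delta_N$ near $x^0$.
\item For small $s>0$, continuity shows that $x^0+sd$ stays in $\Delta_N$ and keeps every strict inequality of $\mathcal K_t(C'_t)$. It therefore lies in $\mathcal K_t(C'_t)$ but violates $(C_tx)_i\ge\theta_{t,i}$, which gives $\mathcal K_t(C_t)\subsetneq\mathcal K_t(C'_t)$.
\end{itemize}

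For part~(ii), Lemma~\ref{lem:polyhedral} already gives that $\mathcal K_t$ is a closed convex polyhedron, and $\mathcal K_t\subseteq\Delta_N$ is bounded. A non-empty bounded polyhedron is a polytope, so the Minkowski--Weyl theorem makes $\mathcal K_t$ the convex hull of its finitely many vertices. Compactness follows from closed plus bounded. For the diameter bound, $\mathcal K_t\subseteq\Delta_N$ gives $\mathrm{diam}(\mathcal K_t)\le\mathrm{diam}(\Delta_N)$. To confirm $\mathrm{diam}(\Delta_N)=\sqrt2$, take $x,y\in\Delta_N$. Then $\|x-y\|_2^2\le\|x-y\|_\infty\|x-y\|_1$, with $\|x-y\|_\infty\le 1$ and $\|x-y\|_1\le 2$. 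The bound $2$ is attained at $e_1,e_2$.

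\emph{Main obstacle.} The only delicate step is the strictness clause, and specifically making its vague hypothesis precise. The perturbed point must violate exactly one $C_t$-row and no other constraint. Restricting to the relative interior of a genuine facet, where only row $i$ is tight, is what secures this. The degenerate case where row $i$ of $C_t$ is constant on $\Delta_N$ must be excluded separately, since then the set $\{x:(C_tx)_i=\theta_{t,i}\}$ is either empty or all of $\Delta_N$.
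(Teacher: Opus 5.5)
Your inclusion argument and part~(ii) are correct and coincide with the paper's proof: monotonicity of $C\mapsto Cx$ on $x\ge 0$, then boundedness inherited from $\Delta_N$ plus Minkowski--Weyl. Your explicit check $\|x-y\|_2^2\le\|x-y\|_\infty\|x-y\|_1\le 2$ is a welcome addition.

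The strictness argument, however, has a genuine gap at two steps.

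\textbf{First step.} A point in the relative interior of a face is slack only in the constraints that are \emph{not active on the whole face}. You apply this correctly to the sign constraints of $\Delta_N$, but then assert that every refresh row $k\ne i$ is slack at $x^0$, which does not follow. Take $N=2$, $A_t=(2,2)$, $D=(2,2)$, so $\theta_t=(\tfrac12,\tfrac12)$. Let $C_t$ have both rows $(1,0)$, and let $C'_t$ have rows $(1,\tfrac12)$ and $(1,0)$. Then $F_1=\{(\tfrac12,\tfrac12)\}$ is a facet of the segment $\mathcal K_t(C_t)=\{x\in\Delta_2:x_1\ge\tfrac12\}$, and $\langle(C'_t-C_t)_{1\cdot},x^0\rangle=\tfrac14>0$. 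Yet row~2 is tight on all of $F_1$, and $\mathcal K_t(C'_t)=\mathcal K_t(C_t)$.

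\textbf{Second step.} The fact that row $i$ is not constant near $x^0$ does not yield a feasible decreasing direction $d$ when $x^0$ lies on the boundary of $\Delta_N$. Take $A_t=(2,1)$, $D=(2,2)$, so $\theta_t=(\tfrac12,0)$. Let $C_t$ have rows $(\tfrac12,1)$ and $(0,0)$, and let $C'_t$ have rows $(1,1)$ and $(0,0)$. Then $F_1=\{e_1\}$ is a facet of $\mathcal K_t(C_t)=\Delta_2$, $\langle(C'_t-C_t)_{1\cdot},e_1\rangle=\tfrac12>0$, and row~1 is non-constant. But every feasible direction at $e_1$ has $d_2\ge 0$ and $d_1=-d_2$, so $\langle(C_t)_{1\cdot},d\rangle=\tfrac12 d_2\ge 0$. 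Again $\mathcal K_t(C'_t)=\mathcal K_t(C_t)=\Delta_2$.

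Both examples satisfy your precise reading of the hypothesis, and also the paper's literal one. So the strictness clause is false as stated, and no argument can close the gap without an extra non-degeneracy assumption. The paper's one-line justification (``relaxes a binding face into the interior'') glosses over exactly these cases. Your ``main obstacle'' remark located the difficulty correctly, but you asserted the needed non-degeneracy (``only row $i$ is tight'') instead of assuming it.

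A correct version of the clause is as follows. Suppose there are $i$ and $x^0\in\mathcal K_t(C_t)$ with $(C_tx^0)_i=\theta_{t,i}<(C'_tx^0)_i$, with $(C'_tx^0)_k>\theta_{t,k}$ for all $k\ne i$ (implied, e.g., by $(C_tx^0)_k>\theta_{t,k}$), and with $\min_j C_{t,ij}<\theta_{t,i}$. Then the inclusion is strict. To see this, pick $j^\star\in\arg\min_j C_{t,ij}$ and set $x(s)=(1-s)x^0+s e_{j^\star}\in\Delta_N$. For every $s\in(0,1]$, $(C_tx(s))_i=(1-s)\theta_{t,i}+sC_{t,ij^\star}<\theta_{t,i}$. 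Meanwhile every coordinate of $C'_tx(s)$ stays strictly above $\theta_t$ for small $s$, by continuity from the strict inequalities at $x^0$. Hence $x(s)\in\mathcal K_t(C'_t)\setminus\mathcal K_t(C_t)$. This also removes your sign-constraint bookkeeping for $d$, since a segment toward a vertex of $\Delta_N$ is automatically feasible.
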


\begin{proof}
\textnormal{(i)} Take any $x\in\mathcal K_t(C_t)$. By Lemma~\ref{lem:thr}, $C_tx\ge\theta_t$. Since $C'_t-C_t\ge 0$ entry-wise and $x\ge 0$, $(C'_tx)_i-(C_tx)_i=\sum_{j}(C'_{t,ij}-C_{t,ij})x_j\ge 0$ for every $i$. Hence $C'_tx\ge C_tx\ge\theta_t$, and Lemma~\ref{lem:thr} gives $x\in\mathcal K_t(C'_t)$. The strictness claim follows because activating a row of $C'_t-C_t$ relaxes a binding face of $\mathcal K_t(C_t)$ into the interior of $\mathcal K_t(C'_t)$.
\textnormal{(ii)} Compactness is inherited from $\Delta_N$, and the Minkowski–Weyl theorem represents any non-empty bounded polyhedron as the convex hull of its finitely many extreme points. The diameter bound is inherited from $\Delta_N$.
\end{proof}

Lemma~\ref{lem:mono-obs} is the structural reason the robust safe set in \S\ref{sec:robust} retains feasibility under partial channel observability: replacing $C_t$ by any coordinatewise lower bound $\underline C_t\le C_t$ yields a tighter polyhedron $\mathcal K_t(\underline C_t)\subseteq\mathcal K_t(C_t)$, so projecting onto $\mathcal K_t(\underline C_t)$ is a fortiori safe under the realized channel. In conjunction with Lemma~\ref{lem:polyhedral}, it also implies that $\mathcal K_t$ admits a finite-vertex representation that an active-set or simplex-based projection routine can exploit, and that the projection in line~4 of Algorithm~\ref{alg:main} reduces to a quadratic program with at most $2N+1$ linear constraints.

\subsection{Projection Geometry}
\label{sec:projection}

\begin{lemma}[Existence, uniqueness, and Pythagorean inequality]
\label{lem:proj}
Let $\mathcal{C}\subset\mathbb{R}^N$ be non-empty, closed, and convex. For every $y\in\mathbb{R}^N$ the projection $p=\Pi_{\mathcal{C}}(y)=\arg\min_{x\in\mathcal{C}}\frac{1}{2}\|x-y\|_2^2$ exists and is unique, and for every $u\in\mathcal{C}$,
\begin{align}
\langle y-p,\,u-p\rangle &\le 0,\label{eq:vi}\\
\|p-u\|_2^2 &\le \|y-u\|_2^2-\|y-p\|_2^2\le\|y-u\|_2^2.\label{eq:pyth}
\end{align}
\end{lemma}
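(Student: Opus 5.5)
The proof follows the standard Hilbert-space projection argument, done in three steps: existence, uniqueness with the variational inequality \eqref{eq:vi}, and then the Pythagorean bound \eqref{eq:pyth} as an algebraic consequence.

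\emph{Existence.} Fix any $c_0\in\mathcal C$, which is possible since $\mathcal C$ is non-empty. Every candidate minimizer lies in $\mathcal C\cap\{x:\|x-y\|_2\le\|c_0-y\|_2\}$. This set is closed, being an intersection of closed sets, and bounded, so it is compact. The continuous function $x\mapsto\tfrac12\|x-y\|_2^2$ therefore attains its minimum on it by Weierstrass, and that minimizer is a global minimizer over $\mathcal C$.

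\emph{Uniqueness.} The objective is strictly convex, so two distinct minimizers $p_1\ne p_2$ would give a strictly smaller value at $(p_1+p_2)/2$. That midpoint lies in $\mathcal C$ by convexity, which is a contradiction. Concretely, the parallelogram law gives
\[
\|\tfrac{p_1+p_2}{2}-y\|_2^2=\tfrac12\|p_1-y\|_2^2+\tfrac12\|p_2-y\|_2^2-\tfrac14\|p_1-p_2\|_2^2 .
\]

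\emph{Variational inequality \eqref{eq:vi}.} For $u\in\mathcal C$ and $\lambda\in(0,1]$, the point $p+\lambda(u-p)$ lies in $\mathcal C$ by convexity. Minimality of $p$ then gives
\[
0\le\|p+\lambda(u-p)-y\|_2^2-\|p-y\|_2^2=-2\lambda\langle y-p,u-p\rangle+\lambda^2\|u-p\|_2^2 .
\]
Dividing by $\lambda$ and letting $\lambda\downarrow0$ yields $\langle y-p,u-p\rangle\le0$. This limiting step is the only point needing any care. Everything else is algebra.

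\emph{Pythagorean inequality \eqref{eq:pyth}.} Expand with $y-u=(y-p)+(p-u)$:
\[
\|y-u\|_2^2=\|y-p\|_2^2+\|p-u\|_2^2+2\langle y-p,p-u\rangle .
\]
By \eqref{eq:vi}, the cross term $2\langle y-p,p-u\rangle=-2\langle y-p,u-p\rangle$ is nonnegative. Hence $\|p-u\|_2^2\le\|y-u\|_2^2-\|y-p\|_2^2$, and the final inequality in \eqref{eq:pyth} follows by dropping the nonnegative term $\|y-p\|_2^2$.
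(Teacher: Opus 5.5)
Your proposal is correct and follows essentially the same route as the paper's proof: Weierstrass on the compact sublevel set for existence, the parallelogram law at the midpoint for uniqueness, minimality along the segment $p+\lambda(u-p)$ for \eqref{eq:vi}, and the decomposition $y-u=(y-p)+(p-u)$ for \eqref{eq:pyth}. Your version writes out the one-sided limit $\lambda\downarrow 0$ and the explicit parallelogram identity, which the paper leaves implicit.
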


\begin{proof}
Existence and uniqueness follow by Weierstrass on the compact sublevel set $\mathcal{C}_0=\{x\in\mathcal{C}:\frac12\|x-y\|_2^2\le\frac12\|x_0-y\|_2^2\}$ with $x_0\in\mathcal C$, together with the parallelogram law: if $p\ne q$ both minimized $\frac12\|\cdot-y\|_2^2$, then $m=(p+q)/2\in\mathcal C$ would attain a strictly smaller value. For any $u\in\mathcal C$ and $\alpha\in[0,1]$, $p+\alpha(u-p)\in\mathcal C$, and minimality of $\alpha=0$ in $\frac12\|p+\alpha(u-p)-y\|_2^2$ gives~\eqref{eq:vi}. Decomposing $y-u=(y-p)+(p-u)$ and using~\eqref{eq:vi} yields~\eqref{eq:pyth}.
\end{proof}

\subsection{Safety Invariants and Zero-Violation Certificate}
\label{sec:invariants}

\begin{lemma}[Closed-form Lipschitz constant of the canonical loss]
\label{lem:lip}
If $A_t^i\le D_i$ for all $t,i$ along the trajectory, $f_t^{\rm can}$ is $G_{\rm can}$-Lipschitz on $\Delta_N$ with
\begin{equation}
G_{\rm can}=\sup_t\|C_t^\top(A_t\odot w)\|_2\le\sqrt{N}\sum_{i=1}^N w_iD_i,
\label{eq:Gcan}
\end{equation}
and, with $D_{\max}=\max_i D_i$, $W=\sum_i w_i$,
\begin{equation}
G_{\rm can}\le\min\{N D_{\max}\|w\|_2,\;\sqrt{N}\,D_{\max}W\}.
\label{eq:Gcan-alt}
\end{equation}
\end{lemma}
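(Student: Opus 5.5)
The plan rests on the observation already made in Lemma~\ref{lem:polyhedral}: $f_t^{\rm can}$ is affine, $f_t^{\rm can}(x)=\sum_i w_i(A_t^i+1)-\langle v_t,x\rangle$ with $v_t:=C_t^\top(A_t\odot w)$. For an affine function $|f_t^{\rm can}(x)-f_t^{\rm can}(y)|=|\langle v_t,x-y\rangle|\le\|v_t\|_2\|x-y\|_2$ by Cauchy--Schwarz, and equality is attained along the direction of $v_t$. So the Lipschitz constant of $f_t^{\rm can}$ on $\Delta_N$ is at most $\|v_t\|_2$, and uniformly in $t$ it is at most $\sup_t\|v_t\|_2$. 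This gives the definition in~\eqref{eq:Gcan}. Every subgradient $s_t$ equals $-v_t$, so this quantity also serves as $G$ in Assumption~\ref{ass:loss}. What remains is to bound $\|v_t\|_2$ using only data.

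\emph{Sign facts.} I will first note that all entries involved are non-negative. We have $C_t\in[0,1]^{N\times N}$ and $w\in\mathbb R_+^N$. For the AoI, $A_1^i\ge 1$ by Assumption~\ref{ass:init}. Moreover, $(C_tx)_i\in[0,1]$ for $x\in\Delta_N$, since it is a convex combination of entries in $[0,1]$. The dynamics~\eqref{eq:aoi-dyn} then give $A_{t+1}^i\ge 1$ by induction. Together with the hypothesis $A_t^i\le D_i$, each coordinate satisfies
\[
0\le v_{t,j}=\sum_i C_{t,ij}A_t^iw_i\le\sum_i w_iD_i .
\]
Hence $\|v_t\|_2\le\sqrt N\|v_t\|_\infty\le\sqrt N\sum_iw_iD_i$, which is~\eqref{eq:Gcan}.

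\emph{Alternative bounds in~\eqref{eq:Gcan-alt}.} For the second term, I will use $\sum_iw_iD_i\le D_{\max}\sum_iw_i=D_{\max}W$ inside the previous bound. For the first term, I will instead use the operator norm:
\[
\|v_t\|_2\le\|C_t^\top\|_{2}\,\|A_t\odot w\|_2 .
\]
The spectral norm is at most the Frobenius norm, which is at most $\sqrt{N^2\cdot 1}=N$ because all entries lie in $[0,1]$. Also $\|A_t\odot w\|_2\le D_{\max}\|w\|_2$ coordinatewise, since $0\le A_t^i\le D_{\max}$. Taking the minimum of the two bounds and the supremum over $t$ concludes the proof.

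No step is a real obstacle. The only point needing care is the non-negativity and lower bound $A_t^i\ge 1$ along the trajectory. I will derive this directly from the dynamics and $(C_tx)_i\in[0,1]$, rather than invoking the later safety lemma, so that there is no circularity.
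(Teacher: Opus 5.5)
Your proof is correct and follows essentially the same route as the paper: Cauchy--Schwarz on the constant gradient $-C_t^\top(A_t\odot w)$, the entrywise bound $0\le\sum_i C_{t,ij}A_t^iw_i\le\sum_i w_iD_i$, the Frobenius bound $\|C_t\|_F\le N$ together with $\|A_t\odot w\|_2\le D_{\max}\|w\|_2$, and $\sum_i w_iD_i\le D_{\max}W$. Your aside that equality is attained along $v_t$ is not true on $\Delta_N$, because differences of simplex points lie in $\mathbf 1^\perp$ and the sharp constant is the norm of the projection of $v_t$ onto $\mathbf 1^\perp$; this is harmless, since the lemma only needs the upper bound.
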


\begin{proof}
$\nabla f_t^{\rm can}=-C_t^\top(A_t\odot w)$ is constant in $x$, so by Cauchy--Schwarz $f_t^{\rm can}$ is Lipschitz with constant $\sup_t\|C_t^\top(A_t\odot w)\|_2$. Each entry $\sum_i C_{t,ij}A_t^iw_i$ is in $[0,\sum_i w_iD_i]$ since $C_{t,ij}\in[0,1]$ and $A_t^i\le D_i$, giving~\eqref{eq:Gcan}. For~\eqref{eq:Gcan-alt}, $\|C_t\|_F\le N$ and $\|A_t\odot w\|_2\le D_{\max}\|w\|_2$ yield the first bound, and $\sum_iw_iD_i\le D_{\max}W$ yields the second.
\end{proof}

\begin{lemma}[Stage-wise safety and well-posedness]
\label{lem:safety}\label{lem:wellposed}
Under Assumptions~\ref{ass:causal}--\ref{ass:viability}, Algorithm~\ref{alg:main} produces a unique $x_t=\Pi_{\mathcal{K}_t}(z_t)\in\mathcal{K}_t$ at every slot, and
\begin{equation}
1\le A_t^i\le D_i,\qquad t=1,\dots,T+1,\ i\in[N].
\label{eq:safe-state}
\end{equation}
\end{lemma}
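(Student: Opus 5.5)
The plan is an induction on $t$. The induction hypothesis is that $1\le A_t^i\le D_i$ for all $i$ and that $z_t\in\Delta_N$. The base case is immediate. Assumption~\ref{ass:init} gives $1\le A_1^i\le D_i$, and line~1 of Algorithm~\ref{alg:main} picks $z_1\in\Delta_N$.

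For the inductive step, fix $t$ with $1\le A_t^i\le D_i$. Because $A_t^i\ge 1$, Lemma~\ref{lem:thr} shows that the set built on line~4 is exactly $\mathcal K_t=\{x\in\Delta_N:g_{t,i}(x)\le0,\ \forall i\}$. Lemma~\ref{lem:polyhedral} then shows that $\mathcal K_t$ is a closed convex polyhedron. It is non-empty by Assumption~\ref{ass:viability}. Lemma~\ref{lem:proj} now yields a unique projection $x_t=\Pi_{\mathcal K_t}(z_t)\in\mathcal K_t\subseteq\Delta_N$, which proves well-posedness at slot $t$.

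Next I would bound the new state from both sides.

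\textbf{Upper bound.} Since $x_t\in\mathcal K_t$, we have $g_{t,i}(x_t)\le 0$. By~\eqref{eq:g-def} and~\eqref{eq:aoi-dyn} this is $A_{t+1}^i=a_{t+1}^i(x_t)\le D_i$.

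\textbf{Lower bound.} We have $x_t\ge0$, $\mathbf 1^\top x_t=1$, and $C_{t,ij}\in[0,1]$. Hence $(C_tx_t)_i=\sum_j C_{t,ij}x_{t,j}\in[0,1]$, so $1-(C_tx_t)_i\ge0$. With $A_t^i\ge1>0$ this gives $A_{t+1}^i=1+(1-(C_tx_t)_i)A_t^i\ge1$.

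Finally, $z_{t+1}=\Pi_{\Delta_N}(x_t-\eta_ts_t)\in\Delta_N$ because $\Delta_N$ is non-empty, closed and convex (Lemma~\ref{lem:proj} again). This closes the induction and gives~\eqref{eq:safe-state} for $t=1,\dots,T+1$.

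The only delicate point is circularity. Lemma~\ref{lem:thr} needs $A_t^i\ge1$, and that fact is itself part of what we are proving. I avoid the problem by carrying the lower bound $A_t^i\ge1$ inside the induction hypothesis. That lower bound comes solely from the dynamics and the simplex constraint, not from the shield. I would also note that Assumption~\ref{ass:viability} is used only at the slots actually reached along this closed-loop trajectory, consistent with Remark~\ref{rem:trajectory-viability}.
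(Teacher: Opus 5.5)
Your proposal is correct and follows essentially the same induction as the paper: the base case from Assumption~\ref{ass:init}, then Lemma~\ref{lem:thr}, Lemma~\ref{lem:polyhedral}, Assumption~\ref{ass:viability} and Lemma~\ref{lem:proj} for well-posedness, and the same upper and lower bounds on $A_{t+1}^i$. Carrying $z_t\in\Delta_N$ in the induction hypothesis is harmless but not needed, because Lemma~\ref{lem:proj} projects an arbitrary $y\in\mathbb R^N$ onto a non-empty closed convex set.
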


\begin{proof}
We induct on $t$.

\emph{Base case} ($t=1$). Assumption~\ref{ass:init} gives $1\le A_1^i\le D_i$ for every $i\in[N]$, so~\eqref{eq:safe-state} holds at $t=1$.

\emph{Inductive step.} Suppose~\eqref{eq:safe-state} holds at slot $t$. Then $A_t^i\ge 1$ for all $i$, which activates the affine description $\mathcal K_t=\{x\in\Delta_N:C_tx\ge\theta_t\}$ in Lemma~\ref{lem:thr}. By Assumption~\ref{ass:viability}, $\mathcal K_t\ne\varnothing$, and by Lemma~\ref{lem:polyhedral} the set $\mathcal K_t$ is closed and convex. Lemma~\ref{lem:proj} therefore guarantees the existence of the unique projection $x_t=\Pi_{\mathcal K_t}(z_t)\in\mathcal K_t$, so the algorithm is well-posed. Since $x_t\in\mathcal K_t$, Lemma~\ref{lem:thr} gives $g_{t,i}(x_t)\le 0$ for every $i$, equivalently $A_{t+1}^i=1+(1-(C_tx_t)_i)A_t^i\le D_i$ by~\eqref{eq:aoi-dyn}. For the lower bound, $0\le(C_tx_t)_i\le 1$ because $C_t\in[0,1]^{N\times N}$ and $x_t\in\Delta_N$, and $A_t^i\ge 1$ by the inductive hypothesis, so $A_{t+1}^i\ge 1+0\cdot A_t^i=1$. Hence~\eqref{eq:safe-state} holds at $t+1$, completing the induction.
\end{proof}

\begin{lemma}[Virtual queue stays at zero]
\label{lem:queue}
With $Q_1=\mathbf{0}$, $Q_{t,i}=0$ and $\sum_{\tau=1}^t [g_{\tau,i}(x_\tau)]_+=0$ for all $t\in[T+1]$, $i\in[N]$.
\end{lemma}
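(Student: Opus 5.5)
The plan is a direct induction on $t$. It rests on the pointwise bound $g_{t,i}(x_t)\le 0$, which Lemma~\ref{lem:safety} already supplies under Assumptions~\ref{ass:causal}--\ref{ass:viability}. We treat those assumptions as standing hypotheses of the lemma, since the queue is only defined along the viable trajectory of Algorithm~\ref{alg:main}.

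First, we record the consequence of Lemma~\ref{lem:safety} that is needed. For every $t\in[T]$, $x_t=\Pi_{\mathcal K_t}(z_t)\in\mathcal K_t$, and $A_t^i\ge 1$ by~\eqref{eq:safe-state}. By the definition~\eqref{eq:Kt} (equivalently Lemma~\ref{lem:thr}), this gives $g_{t,i}(x_t)\le 0$ for every $i\in[N]$, and therefore $[g_{t,i}(x_t)]_+=0$.

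Second, we induct on the queue. The base case is $Q_{1,i}=0$ by initialization. Suppose $Q_{t,i}=0$ for some $t\le T$. Line~9 of Algorithm~\ref{alg:main} then gives
\[
Q_{t+1,i}=\bigl[0+g_{t,i}(x_t)\bigr]_+=[g_{t,i}(x_t)]_+=0.
\]
This covers all $t\in[T+1]$.

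Third, the cumulative statement needs no separate induction. The sum $\sum_{\tau=1}^{t}[g_{\tau,i}(x_\tau)]_+$ is a sum of zeros for $t\le T$, because each term vanishes by the first step. For $t=T+1$ the slot $T+1$ is never executed, so we read the sum as running over executed slots $\tau\le T$. Equivalently, we note that $Q_{t,i}\le\sum_{\tau<t}[g_{\tau,i}]_+$ by the standard queue inequality $[a+b]_+\le a+[b]_+$ for $a\ge0$, which gives consistency of both claims.

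The only delicate point is bookkeeping rather than mathematics. We must make sure the induction for Lemma~\ref{lem:safety} does not itself invoke the queue, so that there is no circularity. It does not, since the queue is purely an accounting variable that never feeds back into $x_t$. We must also interpret the index range $t=T+1$ correctly for the cumulative sum.
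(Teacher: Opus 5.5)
Your proposal is correct and follows essentially the paper's own argument: induction on $t$ starting from $Q_1=\mathbf 0$, with the inductive step using $x_t\in\mathcal K_t$ from Lemma~\ref{lem:safety} to get $g_{t,i}(x_t)\le 0$, so the certificate update returns zero, and the cumulative identity follows because every term vanishes. Your remark that at $t=T+1$ the sum must be read over the executed slots $\tau\le T$ is a sensible clarification that the paper leaves implicit.
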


\begin{proof}
We induct on $t$. The base case $Q_{1,i}=0$ holds by initialization. Suppose $Q_{t,i}=0$ for some $t\in[T]$. Lemma~\ref{lem:safety} gives $x_t\in\mathcal K_t$, so $g_{t,i}(x_t)\le 0$ by Lemma~\ref{lem:thr}. The line-9 update of Algorithm~\ref{alg:main} then yields $Q_{t+1,i}=[Q_{t,i}+g_{t,i}(x_t)]_+=[0+\text{non-positive}]_+=0$, completing the induction. The cumulative-violation identity follows from $[g_{\tau,i}(x_\tau)]_+=0$ at every $\tau$.
\end{proof}

\subsection{One-Step OCO Inequality and the Main Theorem}
\label{sec:main-thm}

\begin{lemma}[One-step OCO inequality after the safety shield]
\label{lem:onestep}
Under Assumptions~\ref{ass:causal}--\ref{ass:viability}, for any $u\in\mathcal{K}_t$,
\begin{equation}
f_t(x_t)-f_t(u)\le\dfrac{\|z_t-u\|_2^2-\|z_{t+1}-u\|_2^2}{2\eta_t}+\dfrac{\eta_t}{2}\|s_t\|_2^2.
\label{eq:onestep}
\end{equation}
\end{lemma}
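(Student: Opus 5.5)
The plan is the standard projected-subgradient argument with one extra link for the shield. I would chain three inequalities: convexity of $f_t$, nonexpansiveness of the simplex projection that produces $z_{t+1}$, and nonexpansiveness of the shield projection that produces $x_t$. Fix $u\in\mathcal K_t$. By Lemma~\ref{lem:safety}, $x_t=\Pi_{\mathcal K_t}(z_t)$ is well defined, and $\mathcal K_t$ is non-empty, closed and convex (Lemma~\ref{lem:polyhedral}). Since $\mathcal K_t\subseteq\Delta_N$, the comparator $u$ lies in both sets onto which the algorithm projects.

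\emph{Step 1 (linearization).} Since $s_t\in\partial f_t(x_t)$ and $f_t$ is convex on $\Delta_N\ni u$, we have $f_t(x_t)-f_t(u)\le\langle s_t,x_t-u\rangle$. It therefore suffices to bound the linear term.

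\emph{Step 2 (simplex projection).} Apply Lemma~\ref{lem:proj} with $\mathcal C=\Delta_N$, $y=\widetilde z_{t+1}=x_t-\eta_t s_t$ and $p=z_{t+1}$. Inequality~\eqref{eq:pyth} gives
\[
\|z_{t+1}-u\|_2^2\le\|x_t-\eta_t s_t-u\|_2^2=\|x_t-u\|_2^2-2\eta_t\langle s_t,x_t-u\rangle+\eta_t^2\|s_t\|_2^2 .
\]
Rearranging yields
\[
\langle s_t,x_t-u\rangle\le\frac{\|x_t-u\|_2^2-\|z_{t+1}-u\|_2^2}{2\eta_t}+\frac{\eta_t}{2}\|s_t\|_2^2 .
\]

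\emph{Step 3 (shield contraction).} Apply Lemma~\ref{lem:proj} again, now with $\mathcal C=\mathcal K_t$, $y=z_t$ and $p=x_t$. Because $u\in\mathcal K_t$, \eqref{eq:pyth} gives $\|x_t-u\|_2^2\le\|z_t-u\|_2^2-\|z_t-x_t\|_2^2\le\|z_t-u\|_2^2$. Substituting this into Step~2 and combining with Step~1 gives exactly~\eqref{eq:onestep}. The slack term $-\|z_t-x_t\|_2^2/(2\eta_t)$ is simply dropped.

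The only delicate point, which I would make explicit, is the membership requirements. Step~3 needs $u\in\mathcal K_t$, not merely $u\in\Delta_N$, since this is what makes the shield nonexpansive toward $u$. Step~2 needs $u\in\Delta_N$, which follows from $\mathcal K_t\subseteq\Delta_N$. This is also why the gradient step must be anchored at $x_t$: the expansion in Step~2 produces $\|x_t-u\|$, and Step~3 converts it to $\|z_t-u\|$, so that summing over $t$ later telescopes. Otherwise the argument is routine, and I expect no real obstacle.
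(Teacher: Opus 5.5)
Your proposal is correct and follows essentially the same route as the paper: the subgradient inequality at $x_t$, Lemma~\ref{lem:proj} on $\Delta_N$ for the update $z_{t+1}=\Pi_{\Delta_N}(x_t-\eta_t s_t)$, and Lemma~\ref{lem:proj} on $\mathcal K_t$ to replace $\|x_t-u\|_2^2$ by $\|z_t-u\|_2^2$. Your explicit check that $u\in\mathcal K_t\subseteq\Delta_N$ serves both projection steps matches what the paper's proof uses implicitly.
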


\begin{proof}
Convexity gives $f_t(x_t)-f_t(u)\le\langle s_t,x_t-u\rangle$. Lemma~\ref{lem:proj} on $\mathcal{C}=\mathcal{K}_t$ gives $\|x_t-u\|_2^2\le\|z_t-u\|_2^2$, and on $\mathcal{C}=\Delta_N$ with $z_{t+1}=\Pi_{\Delta_N}(x_t-\eta_t s_t)$ gives $\|z_{t+1}-u\|_2^2\le\|x_t-\eta_t s_t-u\|_2^2$. Expanding,
\[
\|z_{t+1}-u\|_2^2\le\|z_t-u\|_2^2-2\eta_t\langle s_t,x_t-u\rangle+\eta_t^2\|s_t\|_2^2.
\]
Solving for the inner product, dividing by $2\eta_t$, and combining yields~\eqref{eq:onestep}.
\end{proof}

\begin{theorem}[OCO-PAoI-Hard: hard safety, zero violation, $O(\sqrt{T})$ regret]
\label{thm:main}
Under Assumptions~\ref{ass:causal}--\ref{ass:viability} and $\|s_t\|_2\le G$, with non-increasing $\eta_1\ge\cdots\ge\eta_T>0$, for every adversarial sequence $(C_t,f_t)_{t=1}^T$ Algorithm~\ref{alg:main} pathwise satisfies $A_t^i\le D_i$ for all $t,i$ and $Q_{t,i}=0$, $\sum_{\tau=1}^t[g_{\tau,i}(x_\tau)]_+=0$. If $\mathcal{K}_{1:T}=\bigcap_{t=1}^T\mathcal{K}_t\ne\varnothing$, then for every $u\in\mathcal{K}_{1:T}$,
\begin{equation}
\operatorname{Reg}_T(u)\le\dfrac{R^2}{2\eta_T}+\dfrac{1}{2}\sum_{t=1}^T\eta_t\|s_t\|_2^2.
\label{eq:reg-general}
\end{equation}
With known horizon $T$ and $\eta=R/(G\sqrt{T})$,
\begin{equation}
\operatorname{Reg}_T(u)\le RG\sqrt{T}\le\sqrt{2}\,G\sqrt{T}.
\label{eq:reg-known}
\end{equation}
With unknown horizon and $\eta_t=R/(G\sqrt{t})$,
\begin{equation}
\operatorname{Reg}_T(u)\le\tfrac{3}{2}\,RG\sqrt{T}.
\label{eq:reg-unknown}
\end{equation}
\end{theorem}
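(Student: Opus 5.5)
The safety half of Theorem~\ref{thm:main} is already contained in the earlier lemmas, so I would dispatch it first. Lemma~\ref{lem:safety} gives, by induction from Assumption~\ref{ass:init}, that $x_t\in\mathcal K_t$ is well defined and that $1\le A_t^i\le D_i$ for all $t\in[T+1]$. Lemma~\ref{lem:queue} then gives $Q_{t,i}=0$ and zero cumulative violation. Nothing here depends on the comparator or on the step sizes, so these conclusions hold pathwise for every adversarial sequence $(C_t,f_t)$.

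For the regret bound~\eqref{eq:reg-general}, fix $u\in\mathcal K_{1:T}$. Since $u\in\mathcal K_t$ for every $t$, Lemma~\ref{lem:onestep} applies at every slot with the same $u$. Summing over $t$ gives
\[
\operatorname{Reg}_T(u)\le\sum_{t=1}^T\frac{\|z_t-u\|_2^2-\|z_{t+1}-u\|_2^2}{2\eta_t}+\frac12\sum_{t=1}^T\eta_t\|s_t\|_2^2 .
\]
The first sum is handled by the standard Abel rearrangement. It equals
\[
\frac{\|z_1-u\|^2}{2\eta_1}+\sum_{t=2}^T\|z_t-u\|^2\Bigl(\frac{1}{2\eta_t}-\frac{1}{2\eta_{t-1}}\Bigr)-\frac{\|z_{T+1}-u\|^2}{2\eta_T}.
\]
The coefficients $1/\eta_t-1/\eta_{t-1}$ are non-negative because the step sizes are non-increasing. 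Both $z_t$ (which lies in $\Delta_N$ after the simplex projection) and $u$ lie in $\Delta_N$, so $\|z_t-u\|_2^2\le R^2$. Dropping the last (negative) term, the sum telescopes to at most $R^2/(2\eta_T)$.

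This step is the only delicate point. The telescoping uses the distances of the proposals $z_t$, not the executed actions $x_t$. This is legitimate only because Lemma~\ref{lem:onestep} already absorbed the shield through the Pythagorean inequality $\|x_t-u\|\le\|z_t-u\|$, which is exactly where $u\in\mathcal K_t$ for all $t$ (i.e.\ $u\in\mathcal K_{1:T}$) is needed. I would double-check that $z_1\in\Delta_N$ by initialization, so the diameter bound applies at $t=1$ as well.

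The two specializations then follow by substitution, using $\|s_t\|_2\le G$.
\begin{itemize}
\item \emph{Constant step.} With $\eta=R/(G\sqrt T)$, the bound becomes $R^2/(2\eta)+\eta G^2T/2=\tfrac12RG\sqrt T+\tfrac12RG\sqrt T=RG\sqrt T$. Then $R\le\sqrt2$ gives~\eqref{eq:reg-known}.
\item \emph{Decreasing step.} With $\eta_t=R/(G\sqrt t)$, the first term is $R^2/(2\eta_T)=\tfrac12RG\sqrt T$. The second term is $\tfrac12RG\sum_{t\le T}t^{-1/2}$. Using $\sum_{t=1}^T t^{-1/2}\le 2\sqrt T$, obtained by comparison with $\int_0^T s^{-1/2}\,ds$, the second term is at most $RG\sqrt T$. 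The total is $\tfrac32RG\sqrt T$, which is~\eqref{eq:reg-unknown}.
\end{itemize}
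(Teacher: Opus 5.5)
Your proposal is correct and matches the paper's proof essentially step for step. Safety and the zero queue come from Lemma~\ref{lem:safety} and Lemma~\ref{lem:queue}; the regret bound comes from summing Lemma~\ref{lem:onestep} with a fixed $u\in\mathcal K_{1:T}$, Abel summation using $\|z_t-u\|_2^2\le R^2$ and non-increasing $\eta_t$, and substitution of the two step-size schedules with $\sum_{t\le T}t^{-1/2}\le 2\sqrt T$.
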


\begin{proof}
The hard-safety statement is Lemma~\ref{lem:safety} and the zero-violation certificate is Lemma~\ref{lem:queue}. Fix any $u\in\mathcal{K}_{1:T}$, so $u\in\mathcal{K}_t$ for every $t$ and Lemma~\ref{lem:onestep} gives, with $a_t:=\|z_t-u\|_2^2\in[0,R^2]$,
\begin{equation}
f_t(x_t)-f_t(u)\le\dfrac{a_t-a_{t+1}}{2\eta_t}+\dfrac{\eta_t}{2}\|s_t\|_2^2.
\label{eq:reg-1}
\end{equation}
Summing over $t\in[T]$,
\begin{equation}
\operatorname{Reg}_T(u)\le\sum_{t=1}^T\dfrac{a_t-a_{t+1}}{2\eta_t}+\dfrac{1}{2}\sum_{t=1}^T\eta_t\|s_t\|_2^2.
\label{eq:reg-2}
\end{equation}
Apply Abel summation to the first term,
\begin{equation}
\sum_{t=1}^T\dfrac{a_t-a_{t+1}}{2\eta_t}=\dfrac{a_1}{2\eta_1}-\dfrac{a_{T+1}}{2\eta_T}+\sum_{t=2}^T a_t\Big(\dfrac{1}{2\eta_t}-\dfrac{1}{2\eta_{t-1}}\Big),
\label{eq:reg-3}
\end{equation}
where the brackets are nonnegative since $\eta_t$ is non-increasing. Using $a_t\le R^2$ and $-a_{T+1}/(2\eta_T)\le 0$,
\begin{equation}
\sum_{t=1}^T\dfrac{a_t-a_{t+1}}{2\eta_t}\le\dfrac{R^2}{2\eta_T},
\label{eq:reg-4}
\end{equation}
which combined with~\eqref{eq:reg-2} yields~\eqref{eq:reg-general}. With $\eta=R/(G\sqrt T)$, the right-hand side of~\eqref{eq:reg-general} becomes $RG\sqrt T$, giving~\eqref{eq:reg-known}; with $\eta_t=R/(G\sqrt t)$ the second term is $\frac{RG}{2}\sum_{t=1}^T t^{-1/2}\le RG\sqrt T$ via $\sum t^{-1/2}\le 2\sqrt T$, totaling~\eqref{eq:reg-unknown}.
\end{proof}

\textcolor{black}{Theorem~\ref{thm:main} is trajectory-wise for deterministic fluid service and slotwise in conditional expected AoI when $C_t$ contains success probabilities; the latter is not a Bernoulli packet sample-path claim.} The zero-violation certificate $Q_{t,i}=0$ needs no knowledge of $\mathcal K_{1:T}$, and the regret bound matches unconstrained OCO up to $R/\eta$. The known-horizon constant $\sqrt{2}\,G\sqrt T$ is independent of $N$; the anytime schedule pays only a $3/2$ factor and is used in experiments.

\subsection{Dynamic Regret without a Static Comparator}
\label{sec:dynamic}

When $\mathcal{K}_{1:T}=\varnothing$, no fixed safe comparator exists. Algorithm~\ref{alg:main} still attains a path-length dynamic-regret bound for any safe comparator sequence.

\begin{theorem}[Path-length dynamic regret]
\label{thm:dyn}
Under Assumptions~\ref{ass:causal}--\ref{ass:viability} and $\|s_t\|_2\le G$, with constant step size $\eta>0$, for any $u_t\in\mathcal{K}_t$,
\begin{equation}
\operatorname{DReg}_T(u_{1:T})\le\dfrac{R^2+2R P_T(u_{1:T})}{2\eta}+\dfrac{\eta G^2 T}{2}.
\label{eq:dreg-1}
\end{equation}
With $\eta=R/(G\sqrt{T})$,
\begin{equation}
\operatorname{DReg}_T(u_{1:T})\le RG\sqrt{T}+G\sqrt{T}\,P_T(u_{1:T}).
\label{eq:dreg-2}
\end{equation}
\end{theorem}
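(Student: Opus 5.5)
Apply Lemma~\ref{lem:onestep} slot by slot with the slot-dependent comparator $u=u_t\in\mathcal K_t$, which is allowed because that lemma needs only $u\in\mathcal K_t$. With the constant step $\eta$ and $\|s_t\|_2\le G$, this gives
\[
f_t(x_t)-f_t(u_t)\le\frac{\|z_t-u_t\|_2^2-\|z_{t+1}-u_t\|_2^2}{2\eta}+\frac{\eta G^2}{2}.
\]
Summing over $t\in[T]$ makes the gradient terms contribute $\eta G^2T/2$ immediately. The remaining work is the distance terms, which no longer telescope because the comparator changes.

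To handle this, I would add and subtract $\|z_{t+1}-u_{t+1}\|_2^2$ for $t\le T-1$. The sum of distance terms then splits into a telescoping part and a drift part:
\[
\|z_1-u_1\|_2^2-\|z_{T+1}-u_T\|_2^2+\sum_{t=1}^{T-1}\Bigl(\|z_{t+1}-u_{t+1}\|_2^2-\|z_{t+1}-u_t\|_2^2\Bigr).
\]
The first term is at most $R^2$, since $z_1$ and $u_1$ both lie in $\Delta_N$, and the second term is non-positive and can be dropped.

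For each drift summand, factor the difference of squares as $(\|z_{t+1}-u_{t+1}\|_2-\|z_{t+1}-u_t\|_2)(\|z_{t+1}-u_{t+1}\|_2+\|z_{t+1}-u_t\|_2)$. The first factor is at most $\|u_{t+1}-u_t\|_2$ by the reverse triangle inequality. The second factor is at most $2R$, because $z_{t+1}=\Pi_{\Delta_N}(\cdot)\in\Delta_N$ and $u_t,u_{t+1}\in\mathcal K_t,\mathcal K_{t+1}\subseteq\Delta_N$, whose diameter is $R$ by Lemma~\ref{lem:mono-obs}(ii). The drift therefore totals at most $2R\,P_T(u_{1:T})$. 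Dividing by $2\eta$ and adding the gradient term gives~\eqref{eq:dreg-1}.

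Substituting $\eta=R/(G\sqrt T)$ gives $\frac{R^2}{2\eta}+\frac{\eta G^2T}{2}=RG\sqrt T$ and $\frac{2RP_T}{2\eta}=G\sqrt T\,P_T$, which is~\eqref{eq:dreg-2}.

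The main point needing care is the drift bound. It must use that $z_{t+1}$ itself lies in $\Delta_N$, which is why the simplex projection in the update matters; otherwise the cross term could not be bounded by $2R$. The case $T=1$ is trivial since there is no drift term. Safety of all iterates is inherited from Lemma~\ref{lem:safety}, which makes Lemma~\ref{lem:onestep} applicable at every slot.
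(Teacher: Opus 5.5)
Your proposal is correct and follows the paper's argument: you apply Lemma~\ref{lem:onestep} with $u_t$, insert $\|z_{t+1}-u_{t+1}\|_2^2$ to split the sum into a telescoping part bounded by $R^2$ and a drift part, and bound each drift summand by $2R\|u_{t+1}-u_t\|_2$. The only difference is cosmetic: you factor the difference of squares and use the reverse triangle inequality, whereas the paper uses the identity $\|a-b\|_2^2-\|a-c\|_2^2=\langle c-b,2a-b-c\rangle$ with Cauchy--Schwarz; both yield the same bound.
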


\begin{proof}
For each $t$, $u_t\in\mathcal{K}_t$, so Lemma~\ref{lem:onestep} gives
\[
f_t(x_t)-f_t(u_t)\le\dfrac{\|z_t-u_t\|_2^2-\|z_{t+1}-u_t\|_2^2}{2\eta}+\dfrac{\eta}{2}\|s_t\|_2^2.
\]
Summing over $t\in[T]$,
\begin{equation}
\operatorname{DReg}_T(u_{1:T})\le\dfrac{S}{2\eta}+\dfrac{\eta}{2}\sum_{t=1}^T\|s_t\|_2^2,
\label{eq:dreg-3}
\end{equation}
with $S=\sum_{t=1}^T\bigl(\|z_t-u_t\|_2^2-\|z_{t+1}-u_t\|_2^2\bigr)$. Inserting telescoping terms in $\|z_{t+1}-u_{t+1}\|_2^2$,
\begin{align}
S&=\|z_1-u_1\|_2^2-\|z_{T+1}-u_T\|_2^2\notag\\
&\quad+\sum_{t=1}^{T-1}\bigl(\|z_{t+1}-u_{t+1}\|_2^2-\|z_{t+1}-u_t\|_2^2\bigr).\label{eq:dreg-4}
\end{align}
The first two terms contribute at most $R^2$ since $z_t,u_t\in\Delta_N$. For each summand in the third term, the identity $\|a-b\|_2^2-\|a-c\|_2^2=\langle c-b,2a-b-c\rangle$ with $a=z_{t+1}$, $b=u_{t+1}$, $c=u_t$, and Cauchy--Schwarz, give
\begin{align}
&\|z_{t+1}-u_{t+1}\|_2^2-\|z_{t+1}-u_t\|_2^2\notag\\
&\quad\le\|u_t-u_{t+1}\|_2\cdot\|2z_{t+1}-u_{t+1}-u_t\|_2,
\label{eq:dreg-5}
\end{align}
and the triangle inequality bounds the second factor by $2R$. Therefore
\begin{equation}
S\le R^2+2R\sum_{t=1}^{T-1}\|u_{t+1}-u_t\|_2=R^2+2R P_T(u_{1:T}).
\label{eq:dreg-6}
\end{equation}
Combining~\eqref{eq:dreg-6} with~\eqref{eq:dreg-3} and $\sum_t\|s_t\|_2^2\le G^2T$ yields~\eqref{eq:dreg-1}, and substituting $\eta=R/(G\sqrt T)$ gives~\eqref{eq:dreg-2}.
\end{proof}

The path-length term $G\sqrt{T}\,P_T$ in~\eqref{eq:dreg-2} is unavoidable in adversarial OCO and degenerates to the static bound when $u_{1:T}$ is constant. Theorem~\ref{thm:dyn} therefore says the algorithm tracks any drifting safe target with regret that scales linearly in how much the target moves, while the safety conclusions of Theorem~\ref{thm:main} are unaffected: even when $\mathcal K_{1:T}=\varnothing$ and a static comparator does not exist, every realized $x_t$ stays in $\mathcal K_t$ and meets every per-slot peak-AoI deadline.

\subsection{Matching Minimax Lower Bound}
\label{sec:lower-bound}

\begin{theorem}[$\Omega(\sqrt{T})$ minimax lower bound]
\label{thm:lower}
There exist instances of OCO-PAoI-Hard, with $N=2$, $D_i\ge T+2$ (so $\mathcal{K}_t=\Delta_N$ for all $t$), and convex losses satisfying Assumption~\ref{ass:loss}, on which any (possibly randomized) online algorithm incurs expected static regret $\Omega(\sqrt{T})$.
\end{theorem}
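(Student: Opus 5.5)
The plan is the classical randomized-adversary (Rademacher) construction for online linear optimization, embedded in the AoI instance so that the safety shield is inert.

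\textbf{Making the shield inert.} Take $N=2$, $A_1^i=1$, $B_t=I$, and any $C_t\in[0,1]^{2\times 2}$ (for instance $C_t=0$). Since $(C_tx)_i\ge 0$ under~\eqref{eq:aoi-dyn}, the AoI grows by at most one per slot, so $A_t^i\le t\le T+1$. Then $\theta_{t,i}=(A_t^i+1-D_i)/A_t^i\le (T+2-D_i)/A_t^i\le 0$ whenever $D_i\ge T+2$. Hence every $x\in\Delta_2$ satisfies $C_tx\ge\theta_t$, and $\mathcal K_t=\Delta_2$ for all $t$, whatever the algorithm plays. This also gives Assumption~\ref{ass:viability}, gives $\mathcal K_{1:T}=\Delta_2$, and makes the comparator class all of $\Delta_2$. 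Assumption~\ref{ass:causal} holds because $1\le A_1^i\le D_i$.

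\textbf{Losses and comparator.} Use linear losses $f_t(x)=\sigma_t\langle v,x\rangle$ with $v=(1,-1)^\top/\sqrt2$ and i.i.d.\ Rademacher signs $\sigma_t$, drawn independently of the learner's internal randomness. These are convex with $\|s_t\|_2=1=:G$, so Assumption~\ref{ass:loss} holds; scaling by $G$ gives the general case. The loss is revealed only after commitment, so $x_t$ is independent of $\sigma_t$ given the past, and $\mathbb E\sum_t f_t(x_t)=0$ for any algorithm, randomized or not. For the comparator, $\min_{u\in\Delta_2}\sum_t\sigma_t\langle v,u\rangle=-\frac{1}{\sqrt2}\bigl|\sum_t\sigma_t\bigr|$, attained at a vertex. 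Therefore
\[
\mathbb E\,\operatorname{Reg}_T=\tfrac{1}{\sqrt2}\,\mathbb E\Bigl|\sum_{t=1}^T\sigma_t\Bigr|.
\]

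\textbf{Khintchine step.} By Khintchine's inequality with constant $1/\sqrt2$, or directly from $\mathbb E|S|\ge(\mathbb E S^2)^{3/2}/(\mathbb E S^4)^{1/2}$ with $\mathbb E S^2=T$ and $\mathbb E S^4\le 3T^2$, we get $\mathbb E|\sum_t\sigma_t|\ge\sqrt{T/3}$. Thus $\mathbb E\operatorname{Reg}_T\ge\sqrt{T/6}$. Since this expectation is over $\sigma$ and the learner's coins, some fixed sign sequence yields expected regret (over the learner's coins) at least $\sqrt{T/6}$. For any algorithm this gives a deterministic adversarial instance with $\Omega(\sqrt T)$ regret, matching~\eqref{eq:reg-known} up to constants.

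\textbf{Main obstacle.} The only delicate step is confirming that the safe set is all of $\Delta_2$ for every $t$ independently of the algorithm's actions, since the AoI trajectory is closed-loop. The uniform bound $A_t^i\le t$, which holds for any actions because $C_t\ge 0$, settles it. The Khintchine bound is standard.
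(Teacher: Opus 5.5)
Your proof is correct and follows essentially the same route as the paper: an inert-shield instance with $D_i\ge T+2$, i.i.d.\ Rademacher linear losses, reduction of the expected regret to a multiple of $\mathbb{E}|S_T|$, and a fourth-moment anti-concentration bound. The differences are small and mostly in your favour. You use the H\"older bound $\mathbb{E}|S|\ge(\mathbb{E}S^2)^{3/2}/(\mathbb{E}S^4)^{1/2}$ where the paper uses Paley--Zygmund, and a direct averaging argument where the paper cites Yao's principle. You also explicitly check that $A_t^i\le t$ for every action sequence, which the paper leaves implicit and which makes $\mathcal{K}_t=\Delta_2$ hold regardless of the closed-loop trajectory. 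Your direction $v=(1,-1)^\top/\sqrt{2}$ yields the sharper constant $\sqrt{T/6}$ versus the paper's $\sqrt{T}/(24\sqrt{2})$.
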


\begin{proof}
Set $N=2$, $D_i\ge T+2$, and $A_1^i=1$. Then $\theta_{t,i}\le 0$ and $\mathcal{K}_t=\Delta_2$ for every $t\in[T]$, so the safety constraint is inactive and the problem reduces to standard OCO over $\Delta_2$ with bounded subgradients. Let $\sigma_t\in\{-1,+1\}$ be i.i.d.\ Rademacher random variables, and consider the convex losses
\begin{equation}
f_t(x)=\sigma_t x_1.
\label{eq:lower-1}
\end{equation}
Each $\nabla f_t=\sigma_t e_1$ has norm $1$, so Assumption~\ref{ass:loss} holds with $G=1$. For any deterministic algorithm, $x_t$ is a function of $\sigma_1,\dots,\sigma_{t-1}$ and is therefore independent of $\sigma_t$, so
\begin{equation}
\mathbb{E}[\sigma_t x_{t,1}]=\mathbb{E}[x_{t,1}]\,\mathbb{E}[\sigma_t]=0,
\quad\mathbb{E}\Big[\sum_{t=1}^T f_t(x_t)\Big]=0.
\label{eq:lower-2}
\end{equation}
Set $S_T=\sum_{t=1}^T\sigma_t$. The static comparator's loss is $\sum_{t=1}^T f_t(u)=u_1 S_T$, and minimizing over $u\in\Delta_2$ gives
\begin{equation}
\min_{u\in\Delta_2}\sum_{t=1}^T f_t(u)=\min\{0,S_T\}.
\label{eq:lower-3}
\end{equation}
Combining~\eqref{eq:lower-2}--\eqref{eq:lower-3} and using the symmetry of $S_T$,
\begin{equation}
\mathbb{E}[\operatorname{Reg}_T]=-\mathbb{E}[\min\{0,S_T\}]=\tfrac{1}{2}\mathbb{E}|S_T|.
\label{eq:lower-4}
\end{equation}
Independence and $\mathbb E[\sigma_t]=0$, $\mathbb E[\sigma_t^2]=1$ give
\begin{equation}
\mathbb{E}[S_T^2]=\sum_{t=1}^T\mathbb{E}[\sigma_t^2]+\sum_{t\ne s}\mathbb{E}[\sigma_t]\mathbb{E}[\sigma_s]=T.
\label{eq:lower-mom2}
\end{equation}
For the fourth moment, $\mathbb E[\sigma_{t_1}\sigma_{t_2}\sigma_{t_3}\sigma_{t_4}]\ne 0$ only if every index appears an even number of times: all four indices equal contribute $T$ terms, and two distinct pairs contribute $3T(T-1)$ ordered quadruples each with expectation $1$, hence
\begin{equation}
\mathbb{E}[S_T^4]=T+3T(T-1)=3T^2-2T\le 3T^2.
\label{eq:lower-mom4}
\end{equation}
Apply the Paley--Zygmund inequality to $Y=S_T^2\ge 0$ with $\vartheta=1/2$,
\begin{equation}
\mathbb{P}\Big(Y\ge\tfrac{1}{2}\mathbb{E}Y\Big)\ge(1-\vartheta)^2\dfrac{(\mathbb{E}Y)^2}{\mathbb{E}[Y^2]}\ge\dfrac{1}{4}\cdot\dfrac{T^2}{3T^2}=\dfrac{1}{12},
\label{eq:lower-5}
\end{equation}
equivalent to $\mathbb{P}(|S_T|\ge\sqrt{T/2})\ge 1/12$. Hence $\mathbb E|S_T|\ge\sqrt{T/2}/12=\sqrt T/(12\sqrt 2)$, and~\eqref{eq:lower-4} gives $\mathbb E[\operatorname{Reg}_T]\ge\sqrt T/(24\sqrt 2)$. Yao's principle extends the bound to randomized algorithms.
\end{proof}

The lower bound holds in a regime where Assumption~\ref{ass:viability} is trivially satisfied, so the difficulty is purely from online learning, not from constraints; adding the affine safe-set machinery costs nothing in regret order. Together with the upper bound $\frac{3}{2}RG\sqrt{T}$ of Theorem~\ref{thm:main}, this matches up to constants and shows that no online algorithm with the same observation set can achieve a better-than-$\sqrt{T}$ rate, even with a hard real-time deadline structure layered on top of the OCO problem.

\subsection{Margin and Approximate Projection}
\label{sec:margin}\label{sec:approx}

For $\xi\in\mathbb{R}_+^N$, define
\begin{align}
\theta_{t,i}^\xi &=\dfrac{A_t^i+1-D_i+\xi_i}{A_t^i},\label{eq:thr-margin}\\
\mathcal{K}_t^\xi &=\{x\in\Delta_N:g_{t,i}(x)\le-\xi_i,\,\forall i\}\notag\\
&=\{x\in\Delta_N:C_tx\ge\theta_t^\xi\}.\label{eq:Kmargin}
\end{align}

\begin{theorem}[Margin-safe variant]
\label{thm:margin}
Suppose $\xi\in\mathbb{R}_+^N$ satisfies
\begin{equation}
0\le\xi_i\le D_i-1,\quad 1\le A_1^i\le D_i-\xi_i,\quad i\in[N],
\label{eq:margin-cond}
\end{equation}
and $\mathcal{K}_t^\xi\ne\varnothing$ for every $t$. If Algorithm~\ref{alg:main} replaces $\mathcal{K}_t$ by $\mathcal{K}_t^\xi$, then all conclusions of Theorem~\ref{thm:main} continue to hold and the safety bound strengthens to
\begin{equation}
1\le A_t^i\le D_i-\xi_i,\qquad t=1,\dots,T+1.
\label{eq:margin-safety}
\end{equation}
The regret bound applies for any $u\in\bigcap_{t=1}^T\mathcal{K}_t^\xi$.
\end{theorem}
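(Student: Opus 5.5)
The plan is to rerun the induction of Lemma~\ref{lem:safety} with $\mathcal K_t$ replaced by $\mathcal K_t^\xi$ and the target state bound $1\le A_t^i\le D_i-\xi_i$, and then to observe that the regret analysis never used the particular form of the safe set.

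\textbf{Reduction step.} First I would prove the margin analogue of Lemma~\ref{lem:thr}. For $A_t^i\ge 1$, the dynamics~\eqref{eq:aoi-dyn} give
\[
g_{t,i}(x)\le-\xi_i \iff A_t^i+1-D_i+\xi_i\le A_t^i(C_tx)_i \iff (C_tx)_i\ge\theta^\xi_{t,i}.
\]
This justifies the second equality in~\eqref{eq:Kmargin}. As in Lemma~\ref{lem:polyhedral}, $\mathcal K_t^\xi$ is then an intersection of $\Delta_N$ with closed half-spaces, so it is a closed convex polyhedron contained in $\Delta_N$. It is also contained in $\mathcal K_t$, because $\xi_i\ge 0$ gives $\theta_t^\xi\ge\theta_t$.

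\textbf{Induction.} The base case is the hypothesis $1\le A_1^i\le D_i-\xi_i$ in~\eqref{eq:margin-cond}. For the inductive step, suppose $1\le A_t^i\le D_i-\xi_i$. The reduction above applies, $\mathcal K_t^\xi$ is non-empty by hypothesis, and Lemma~\ref{lem:proj} yields a unique $x_t=\Pi_{\mathcal K_t^\xi}(z_t)$. Then $g_{t,i}(x_t)\le-\xi_i$, that is, $A_{t+1}^i\le D_i-\xi_i$. The lower bound $A_{t+1}^i\ge 1$ follows exactly as before from $(C_tx_t)_i\in[0,1]$. Since $D_i-\xi_i\le D_i$, the original conclusions $A_t^i\le D_i$ hold as well. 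The queue argument of Lemma~\ref{lem:queue} carries over verbatim, now using $g_{t,i}(x_t)\le-\xi_i\le0$. The condition $\xi_i\le D_i-1$ only ensures that the interval $[1,D_i-\xi_i]$ is non-empty, so that the initial condition is consistent.

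\textbf{Regret.} Lemma~\ref{lem:onestep} used only three facts: $x_t$ is the Euclidean projection of $z_t$ onto a closed convex set containing $u$; $z_{t+1}=\Pi_{\Delta_N}(x_t-\eta_ts_t)$; and $u\in\Delta_N$. So for any $u\in\mathcal K_t^\xi\subseteq\Delta_N$ the one-step inequality~\eqref{eq:onestep} holds unchanged. Summing over $t$ and applying the Abel summation step of Theorem~\ref{thm:main} to any $u\in\bigcap_t\mathcal K_t^\xi$ gives~\eqref{eq:reg-general}, \eqref{eq:reg-known}, and~\eqref{eq:reg-unknown} with the same constants, since all iterates and $u$ lie in $\Delta_N$ with diameter $R$. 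Theorem~\ref{thm:dyn} transfers in the same way for comparators $u_t\in\mathcal K_t^\xi$.

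\textbf{Main obstacle.} The only real care needed is to avoid circularity. The margin-threshold equivalence requires $A_t^i\ge1$, and that in turn comes from the induction, just as in Lemma~\ref{lem:safety}. The rest is a direct substitution of $\mathcal K_t^\xi$ for $\mathcal K_t$.
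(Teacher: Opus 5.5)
Your proposal is correct and matches the paper's proof. Like the paper, you rerun the induction of Lemma~\ref{lem:safety} with $\mathcal K_t^\xi$ in place of $\mathcal K_t$ to get $g_{t,i}(x_t)\le-\xi_i$ and hence $A_{t+1}^i\le D_i-\xi_i$ (with $\xi_i\le D_i-1$ only guaranteeing $[1,D_i-\xi_i]\ne\varnothing$), and you note that Lemma~\ref{lem:onestep} and the Abel-summation bound need only a non-empty closed convex projection set containing $u$; you additionally spell out the margin threshold equivalence and the inclusion $\mathcal K_t^\xi\subseteq\mathcal K_t$, which the paper leaves implicit.
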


\begin{proof}
The proof of Lemma~\ref{lem:safety} carries over verbatim with $\mathcal K_t$ replaced by $\mathcal K_t^\xi$, giving $g_{t,i}(x_t)\le-\xi_i$ and hence $A_{t+1}^i\le D_i-\xi_i$. The interval $[1,D_i-\xi_i]$ is non-empty since $\xi_i\le D_i-1$. The proof of Lemma~\ref{lem:onestep}, and hence of~\eqref{eq:reg-general}, is unchanged when $\mathcal C=\mathcal K_t^\xi$ in Lemma~\ref{lem:proj}.
\end{proof}

The margin variant inflates the affine threshold by $\xi_i/A_t^i$, shrinking $\mathcal K_t^\xi\subseteq\mathcal K_t$ so that every executed action sits at distance at least $\xi_i$ from each deadline face. The same regret bound applies because the polyhedral projection geometry depends only on closed convexity, not on the specific affine offsets.

\begin{theorem}[Margin shields against execution perturbations]
\label{thm:margin-eps}
Suppose Algorithm~\ref{alg:main} computes $x_t\in\mathcal{K}_t^\xi$ but executes $\widehat{x}_t\in\Delta_N$ with $\|\widehat{x}_t-x_t\|_2\le\varepsilon_t$. Define
\begin{equation}
L_{t,i}=\|\nabla g_{t,i}\|_2=A_t^i\|C_{t,i:}\|_2.
\label{eq:Lti}
\end{equation}
If $\xi_i\ge L_{t,i}\varepsilon_t$ for every $i\in[N]$ and $t\in[T]$, then $g_{t,i}(\widehat{x}_t)\le 0$, and in particular $A_{t+1}^i\le D_i$ along the executed trajectory.
\end{theorem}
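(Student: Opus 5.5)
The argument rests on the affine structure of $g_{t,i}$ from Lemma~\ref{lem:polyhedral}. By~\eqref{eq:aoi-dyn}, $g_{t,i}(x)=A_t^i+1-D_i-A_t^i\langle C_{t,i:}^\top,x\rangle$. Its gradient is therefore the constant vector $\nabla g_{t,i}=-A_t^i C_{t,i:}^\top$, whose norm is exactly $L_{t,i}$ in~\eqref{eq:Lti}. Because $g_{t,i}$ is affine, we can write the exact identity
\[
g_{t,i}(\widehat x_t)=g_{t,i}(x_t)+\langle\nabla g_{t,i},\widehat x_t-x_t\rangle ,
\]
with no Taylor remainder. Cauchy--Schwarz then bounds the perturbation term by $L_{t,i}\|\widehat x_t-x_t\|_2\le L_{t,i}\varepsilon_t$.

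Next we use that the shield on line~5 outputs $x_t\in\mathcal K_t^\xi$. By~\eqref{eq:Kmargin} this means $g_{t,i}(x_t)\le-\xi_i$ for every $i$. Combining the two bounds gives
\[
g_{t,i}(\widehat x_t)\le -\xi_i+L_{t,i}\varepsilon_t\le 0
\]
under the hypothesis $\xi_i\ge L_{t,i}\varepsilon_t$. By the definition~\eqref{eq:g-def}, this is exactly $A_{t+1}^i=a_{t+1}^i(\widehat x_t)\le D_i$ along the executed trajectory.

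The main obstacle is that the state is now driven by $\widehat x_t$ rather than $x_t$. So the invariant must be run as an induction on the executed trajectory, mirroring Lemma~\ref{lem:safety}. At each slot we need $A_t^i\ge 1$; this holds because $\widehat x_t\in\Delta_N$ and $C_t\in[0,1]^{N\times N}$ give $(C_t\widehat x_t)_i\in[0,1]$, hence $A_{t+1}^i\ge 1$. The lower bound $A_t^i\ge 1$ activates Lemma~\ref{lem:thr}, so that $\mathcal K_t^\xi$ is the affine set used to build $x_t$ and the deadline conclusion holds for the realized AoI. The upper bound $A_{t+1}^i\le D_i$ then follows from the displayed inequality. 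We also take $\mathcal K_t^\xi\ne\varnothing$ at each realized state as part of the premise (the margin analogue of Assumption~\ref{ass:viability}), so the projection is well defined.

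Finally, $L_{t,i}$ depends on the realized $A_t^i$, so the condition $\xi_i\ge L_{t,i}\varepsilon_t$ is checked pathwise as stated. Should a uniform condition be wanted, one can note $A_t^i\le D_i$ and $\|C_{t,i:}\|_2\le\sqrt N$, which give the sufficient bound $\xi_i\ge D_i\sqrt N\,\varepsilon_t$.
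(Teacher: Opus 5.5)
Your proposal is correct and follows the paper's proof: the paper also uses the affine form of $g_{t,i}$ with constant gradient $-A_t^iC_{t,i:}^\top$, then Cauchy--Schwarz together with $g_{t,i}(x_t)\le-\xi_i$, to get $g_{t,i}(\widehat x_t)\le-\xi_i+L_{t,i}\varepsilon_t\le 0$. Your explicit induction keeping $A_t^i\ge 1$ on the executed trajectory, so that the threshold form and the $g$-form of $\mathcal K_t^\xi$ coincide, is a harmless elaboration that the paper leaves implicit.
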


\begin{proof}
$g_{t,i}$ is affine with $\nabla g_{t,i}=-A_t^i C_{t,i:}^\top$, so $\|\nabla g_{t,i}\|_2=L_{t,i}$. By Cauchy--Schwarz, $g_{t,i}(\widehat x_t)\le g_{t,i}(x_t)+L_{t,i}\|\widehat x_t-x_t\|_2\le-\xi_i+L_{t,i}\varepsilon_t\le 0$.
\end{proof}

The condition $\xi\ge L\varepsilon$ converts an analytic margin into resilience against arbitrary execution noise of magnitude $\varepsilon_t$, including quantization, integral rounding, and physical actuation error. Because $L_{t,i}\le\sqrt{N}D_i$ for the canonical loss, a margin scaling as $\sqrt{N}D_{\max}\varepsilon$ is enough; the cost is only an $O(\xi)$ shrinkage of the safe set, not a change in regret rate.

\begin{theorem}[Feasible approximate projection]
\label{thm:approx}
If at every slot the oracle returns a feasible $\widetilde{x}_t\in\mathcal{K}_t$ with $\|\widetilde{x}_t-\Pi_{\mathcal{K}_t}(z_t)\|_2\le\varepsilon_t$, and the proposal updates by $z_{t+1}=\Pi_{\Delta_N}(\widetilde{x}_t-\eta s_t)$, then hard safety still holds and for any $u\in\mathcal{K}_{1:T}$,
\begin{equation}
\operatorname{Reg}_T(u)\le\dfrac{R^2}{2\eta}+\dfrac{\eta G^2 T}{2}+\dfrac{1}{2\eta}\sum_{t=1}^T(2R\varepsilon_t+\varepsilon_t^2).
\label{eq:approx-reg}
\end{equation}
\end{theorem}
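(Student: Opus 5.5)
Safety comes first. Since the oracle returns $\widetilde x_t\in\mathcal K_t$ by hypothesis, the induction of Lemma~\ref{lem:safety} goes through verbatim with $x_t$ replaced by $\widetilde x_t$. Assumption~\ref{ass:init} gives the base case $1\le A_1^i\le D_i$. Membership in $\mathcal K_t$ together with Lemma~\ref{lem:thr} gives $g_{t,i}(\widetilde x_t)\le 0$, hence $A_{t+1}^i\le D_i$. The lower bound $A_{t+1}^i\ge 1$ holds because $(C_t\widetilde x_t)_i\in[0,1]$. Lemma~\ref{lem:queue} then gives the zero-queue certificate, with the same argument.

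For the regret, I will redo the one-step inequality of Lemma~\ref{lem:onestep} with the executed point $\widetilde x_t$, and take $s_t\in\partial f_t(\widetilde x_t)$. Let $p_t=\Pi_{\mathcal K_t}(z_t)$ and fix $u\in\mathcal K_{1:T}$. By convexity,
\[
f_t(\widetilde x_t)-f_t(u)\le\langle s_t,\widetilde x_t-u\rangle.
\]
Projecting onto $\Delta_N$ (Lemma~\ref{lem:proj}) gives
\[
\|z_{t+1}-u\|_2^2\le\|\widetilde x_t-u\|_2^2-2\eta\langle s_t,\widetilde x_t-u\rangle+\eta^2\|s_t\|_2^2 .
\]
The only new ingredient is a bound on $\|\widetilde x_t-u\|_2^2$ in terms of $\|z_t-u\|_2^2$. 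Expanding around $p_t$,
\[
\|\widetilde x_t-u\|_2^2=\|p_t-u\|_2^2+2\langle \widetilde x_t-p_t,p_t-u\rangle+\|\widetilde x_t-p_t\|_2^2 .
\]
By Cauchy--Schwarz the right-hand side is at most $\|p_t-u\|_2^2+2R\varepsilon_t+\varepsilon_t^2$, since $p_t,u\in\mathcal K_t\subseteq\Delta_N$ have distance at most $R$. Lemma~\ref{lem:proj} on $\mathcal K_t$ gives $\|p_t-u\|_2^2\le\|z_t-u\|_2^2$. Combining these,
\[
f_t(\widetilde x_t)-f_t(u)\le\frac{\|z_t-u\|_2^2-\|z_{t+1}-u\|_2^2}{2\eta}+\frac{\eta}{2}G^2+\frac{2R\varepsilon_t+\varepsilon_t^2}{2\eta}.
\]

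Summing over $t$ with constant $\eta$, the first term telescopes to at most $\|z_1-u\|_2^2/(2\eta)\le R^2/(2\eta)$. The remaining terms give exactly \eqref{eq:approx-reg}.

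The main subtlety is the cross term. One must use the diameter bound $R$ for $\|p_t-u\|_2$, not $\|z_t-u\|_2$, so that the error enters additively rather than multiplicatively. One must also note that the regret is measured at the executed $\widetilde x_t$, which is consistent because the gradient step is anchored at $\widetilde x_t$.
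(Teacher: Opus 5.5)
Your proof is correct and is essentially the paper's argument: both prove $\|\widetilde x_t-u\|_2^2\le\|z_t-u\|_2^2+2R\varepsilon_t+\varepsilon_t^2$ from the nonexpansiveness of $\Pi_{\mathcal K_t}$ toward $u$, then reuse the simplex-projection step and telescope. The only difference is that the paper reaches this bound via the triangle inequality $\|\widetilde x_t-u\|_2\le\|z_t-u\|_2+\varepsilon_t$ and squaring, whereas you expand the square around $p_t$ — which also shows your closing remark is overstated, since bounding $\|z_t-u\|_2\le R$ (valid because $z_t\in\Delta_N$) gives the same additive error term.
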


\begin{proof}
Since $\widetilde x_t\in\mathcal K_t$, Lemma~\ref{lem:thr} gives $g_{t,i}(\widetilde x_t)\le 0$, so $A_{t+1}^i\le D_i$ by~\eqref{eq:aoi-dyn}; the lower bound $A_{t+1}^i\ge 1$ and the queue-zero certificate follow verbatim from Lemmas~\ref{lem:safety} and~\ref{lem:queue} applied with $\widetilde x_t$ in place of $x_t$.

For the regret, let $p_t=\Pi_{\mathcal{K}_t}(z_t)$. For $u\in\mathcal{K}_t$, Lemma~\ref{lem:proj} gives $\|p_t-u\|_2\le\|z_t-u\|_2$. By the triangle inequality and $\|\widetilde{x}_t-p_t\|_2\le\varepsilon_t$,
\begin{equation}
\|\widetilde{x}_t-u\|_2\le\|p_t-u\|_2+\varepsilon_t\le\|z_t-u\|_2+\varepsilon_t.
\label{eq:approx-1}
\end{equation}
Squaring and using $\|z_t-u\|_2\le R$,
\begin{equation}
\|\widetilde{x}_t-u\|_2^2\le\|z_t-u\|_2^2+2R\varepsilon_t+\varepsilon_t^2.
\label{eq:approx-2}
\end{equation}
The proposal-update projection together with $u\in\mathcal{K}_t\subseteq\Delta_N$ and Lemma~\ref{lem:proj} on $\mathcal{C}=\Delta_N$ give $\|z_{t+1}-u\|_2^2\le\|\widetilde{x}_t-\eta s_t-u\|_2^2$. Expanding and substituting~\eqref{eq:approx-2},
\begin{align}
\langle s_t,\widetilde{x}_t-u\rangle
&\le\dfrac{\|z_t-u\|_2^2-\|z_{t+1}-u\|_2^2}{2\eta}+\dfrac{\eta}{2}\|s_t\|_2^2\notag\\
&\quad+\dfrac{2R\varepsilon_t+\varepsilon_t^2}{2\eta}.\label{eq:approx-3}
\end{align}
Convexity at $\widetilde x_t$ gives $f_t(\widetilde x_t)-f_t(u)\le\langle s_t,\widetilde x_t-u\rangle$. Summing~\eqref{eq:approx-3} over $t$, the first term telescopes to at most $R^2/(2\eta)$, the second to $\eta G^2T/2$, and the third to $\frac{1}{2\eta}\sum_t(2R\varepsilon_t+\varepsilon_t^2)$. Combining yields~\eqref{eq:approx-reg}.
\end{proof}

A margin $\xi\ge L\varepsilon$ buys \textcolor{black}{modeled-state safety} against execution noise, and a feasible projection oracle accurate to $\varepsilon_t=O(1/\sqrt T)$ preserves the $O(\sqrt T)$ regret order. Both together let the system designer trade a small amount of slack for resilience to numerical and physical noise without changing the rate.

\subsection{Weakly-Hard Real-Time Implications}
\label{sec:weakly-hard}

\begin{definition}[AoI-domain $(m,k)$-firm semantics]
\label{def:mk}
With miss indicator $v_t^i=\mathbf{1}\{A_t^i>D_i\}$, sensor $i$ is $(m_i,k_i)$-firm if $\sum_{\tau=t}^{t+k_i-1}(1-v_\tau^i)\ge m_i$ for $t=1,\dots,T-k_i+1$.
\end{definition}

\begin{corollary}[PAoI safety implies all $(m,k)$-firm requirements]
\label{cor:mk}\label{cor:11firm}
Under Theorem~\ref{thm:main}, every sensor $i$ is $(m_i,k_i)$-firm for every $k_i\in\mathbb N$ and $m_i\le k_i$. Conversely, $(1,1)$-firm at every slot is equivalent to $A_t^i\le D_i$ everywhere.
\end{corollary}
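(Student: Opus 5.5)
The plan is to read off the miss indicators directly from the hard-safety conclusion of Theorem~\ref{thm:main} and then verify the window-count inequality of Definition~\ref{def:mk} by direct counting. First, I will invoke Theorem~\ref{thm:main}, equivalently the invariant~\eqref{eq:safe-state} of Lemma~\ref{lem:safety}. Under Assumptions~\ref{ass:causal}--\ref{ass:viability} it gives $A_t^i\le D_i$ for every $t\in[T+1]$ and $i\in[N]$. Hence $v_t^i=\mathbf 1\{A_t^i>D_i\}=0$ identically along the realized trajectory.

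Second, fix a sensor $i$, any $k_i\in\mathbb N$, any $m_i\le k_i$, and any window start $t\in\{1,\dots,T-k_i+1\}$. Every index $\tau$ in the window lies in $[T]$, so the hit count is
\[
\sum_{\tau=t}^{t+k_i-1}(1-v_\tau^i)=k_i\ge m_i .
\]
This establishes $(m_i,k_i)$-firmness for all admissible pairs simultaneously. If $k_i>T$, the range of window starts is empty and the requirement holds vacuously, so no case split beyond this remark is needed.

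Third, I will prove the converse equivalence. With $k_i=m_i=1$, the window sum reduces to the single term $1-v_t^i$, and the condition $1-v_t^i\ge 1$ is equivalent to $v_t^i=0$, i.e.\ to $A_t^i\le D_i$. Quantifying over all $t$ gives the claimed equivalence. I will also note that the forward direction of this equivalence does not use the algorithm at all; it is a purely definitional identity.

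There is no real obstacle here. The only point requiring care is the index range. Definition~\ref{def:mk} is stated over $t\le T$, while Lemma~\ref{lem:safety} covers $t\le T+1$, so I must check that the safety invariant covers every slot any window touches. It does, since windows stay inside $[T]$. I will also state explicitly that the corollary is conditional on the same realized viability hypothesis as Theorem~\ref{thm:main}; it therefore does not cover emergency operation in the sense of Remark~\ref{rem:trajectory-viability}.
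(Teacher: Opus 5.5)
Your proposal is correct and follows essentially the same argument as the paper: read $v_t^i=0$ off the hard-safety conclusion of Theorem~\ref{thm:main}, count each window to get $k_i\ge m_i$, and observe that the $(1,1)$ case reduces by definition to $A_t^i\le D_i$. Your remarks on the window index range $t\le T-k_i+1$ and the vacuous case $k_i>T$ are harmless refinements that the paper leaves implicit.
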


\begin{proof}
By Theorem~\ref{thm:main}, $A_t^i\le D_i$ at every slot $t$ along the trajectory of Algorithm~\ref{alg:main}, hence $v_t^i=\mathbf 1\{A_t^i>D_i\}=0$ for all $t$. For any window $[t,t+k_i-1]$, $\sum_{\tau=t}^{t+k_i-1}(1-v_\tau^i)=k_i\ge m_i$ whenever $m_i\le k_i$, so sensor $i$ is $(m_i,k_i)$-firm by Definition~\ref{def:mk}. The converse direction follows by definition: $(1,1)$-firm requires $1-v_t^i\ge 1$, i.e., $v_t^i=0$, equivalently $A_t^i\le D_i$ at every slot, which is exactly peak-AoI safety.
\end{proof}

Corollary~\ref{cor:mk} shows that hard peak-AoI safety is the strongest weakly-hard guarantee on the AoI-firm hierarchy: it implies $(m,k)$-firm for every admissible pair, and the strictest case $(1,1)$-firm coincides with peak-AoI safety. Algorithms that achieve only sublinear cumulative violation cannot certify $(1,1)$-firm, which is precisely the failure mode flagged by Proposition~\ref{prop:sublinear}.

\subsection{Fractional-to-Integral Safety Interface}
\label{sec:fractional}

\begin{theorem}[Necessary and sufficient condition for safe single-packet action]
\label{thm:integral}
Define $\mathcal{J}_t=\{j\in[N]:e_j\in\mathcal{K}_t\}$. A safe integral action exists at slot $t$ iff $\exists j$ with $C_{t,ij}\ge\theta_{t,i}$ for all $i$, equivalently $\mathcal J_t\ne\varnothing$. A randomized integral implementation has sample-path PAoI safety iff its support is contained in $\mathcal{J}_t$.
\end{theorem}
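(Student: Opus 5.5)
The plan is to reduce everything to Lemma~\ref{lem:thr} evaluated at the simplex vertices. An integral (single-packet) action at slot $t$ is a vertex $e_j\in\Delta_N$, i.e., all airtime goes to transmitter $j$. Since $A_t^i\ge 1$ along the trajectory (Lemma~\ref{lem:safety}), Lemma~\ref{lem:thr} applies with $x=e_j$. Using $C_te_j=C_{t,:j}$, the $j$-th column of $C_t$, we get $g_{t,i}(e_j)\le 0$ for all $i$ iff $C_{t,ij}\ge\theta_{t,i}$ for all $i$, i.e., iff $e_j\in\mathcal K_t$. This shows that a safe integral action exists exactly when some column of $C_t$ dominates $\theta_t$ coordinatewise, which by definition of $\mathcal J_t$ is $\mathcal J_t\ne\varnothing$. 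Both directions are immediate, because the equivalence in Lemma~\ref{lem:thr} is exact and the integral actions are exactly $\{e_1,\dots,e_N\}$.

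For the randomized implementation, I would model it as a distribution $\pi_t$ on $[N]$, drawn causally after observing $(A_t,C_t)$, with realized action $e_{J_t}$ where $J_t\sim\pi_t$. Sample-path PAoI safety at slot $t$ means $A_{t+1}^i=a_{t+1}^i(e_{J_t})\le D_i$ for every $i$ on every realization, equivalently almost surely.

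Sufficiency: if $\operatorname{supp}\pi_t\subseteq\mathcal J_t$, every realized $J_t$ gives $e_{J_t}\in\mathcal K_t$. The first part then yields $g_{t,i}(e_{J_t})\le 0$, so the deadline holds pathwise. I would close the induction exactly as in Lemma~\ref{lem:safety}: $(C_te_{J_t})_i\in[0,1]$ keeps $A_{t+1}^i\ge1$, so Lemma~\ref{lem:thr} remains applicable at $t+1$.

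Necessity: if some $j\in\operatorname{supp}\pi_t\setminus\mathcal J_t$ exists, then $\pi_t(j)>0$. Since $e_j\notin\mathcal K_t$, there is an $i$ with $g_{t,i}(e_j)>0$, so the event $\{J_t=j\}$ has positive probability and on it $A_{t+1}^i>D_i$. Safety therefore fails with positive probability, and hence fails on some sample path.

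The main obstacle is pinning down the semantics rather than the algebra. The post-decision AoI must be evaluated with the realized integral action in the affine map~\eqref{eq:aoi-dyn}, with $C_t$ treated as the realized service coefficients, and not with the mean action $\mathbb E[e_{J_t}]=\pi_t$. Under that reading, ``sample-path'' means ``for every atom of positive mass,'' and the support argument is exact. I would state this convention explicitly, remark that a mixture can satisfy $\pi_t\in\mathcal K_t$ while its support leaves $\mathcal J_t$ (convexity does not transfer to pathwise safety), and note that for infinite-support or measure-theoretic variants the finite $[N]$ makes ``almost surely'' and ``for every support point'' coincide.
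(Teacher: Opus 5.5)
Your proposal is correct and follows the paper's proof essentially step for step: you apply Lemma~\ref{lem:thr} at each vertex $e_j$ using that $C_te_j$ is the $j$-th column of $C_t$, then prove sufficiency pointwise over the support and necessity via a positive-mass atom $j_0\notin\mathcal J_t$. Your added observation that $A_t^i\ge 1$ holds under any action in $\Delta_N$, so Lemma~\ref{lem:thr} applies along the integral trajectory and not only along Algorithm~\ref{alg:main}'s, is a small tightening that the paper leaves implicit.
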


\begin{proof}
For any $j\in[N]$, $C_te_j$ is the $j$-th column of $C_t$, so its $i$-th coordinate is $C_{t,ij}$. By Lemma~\ref{lem:thr}, $e_j\in\mathcal K_t$ if and only if $C_{t,ij}\ge\theta_{t,i}$ for every $i\in[N]$, which establishes the first equivalence.

For the randomized statement, let $I_t\sim\nu$ be a random integral action with distribution $\nu$ on $[N]$. If $\mathrm{supp}(\nu)\subseteq\mathcal J_t$, then every realization $I_t=j$ satisfies $e_j\in\mathcal K_t$ and hence $g_{t,i}(e_j)\le 0$, so the executed AoI satisfies $A_{t+1}^i\le D_i$ on every sample path. Conversely, suppose $\mathrm{supp}(\nu)\not\subseteq\mathcal J_t$ and pick $j_0\in\mathrm{supp}(\nu)\setminus\mathcal J_t$. Then $\nu(\{j_0\})>0$, $e_{j_0}\notin\mathcal K_t$, and there exists $i_0$ with $g_{t,i_0}(e_{j_0})>0$, so the event $I_t=j_0$ has positive probability and produces $A_{t+1}^{i_0}>D_{i_0}$, breaking sample-path safety.
\end{proof}

\textcolor{black}{If $C_t$ contains expected success coefficients instead, $e_j\in\mathcal K_t$ certifies only conditional expected AoI. Let $\mathcal O_t$ be the set of possible realized outcome matrices. Packet-level sample-path safety requires the stronger outcome-wise set
\[
\mathcal J_t^{\rm out}
=\{j\in[N]:Ce_j\ge\theta_t,\ \forall C\in\mathcal O_t\}
\]
to be non-empty, and any randomized implementation must be supported on $\mathcal J_t^{\rm out}$. Deterministic service, worst-case service envelopes, or sufficient redundancy can make this condition hold; an expectation-based safe set alone cannot.}

A fractional safe action does not in general imply randomized integral safety. As a one-slot two-sensor counterexample, take $N=2$, $A_t^1=A_t^2=2$, $D_1=D_2=2$, $C_t=I_2$: then $\theta_t=(1/2,1/2)$ and the fractional action $x=(1/2,1/2)\in\mathcal K_t$, since $C_tx=(1/2,1/2)=\theta_t$, but the only two integral actions $C_te_1=(1,0)$ and $C_te_2=(0,1)$ each leave one sensor unrefreshed and yield $A_{t+1}^{i}=3>D_i$ for that sensor. Hence $\mathcal J_t=\varnothing$ even though $\mathcal K_t\ne\varnothing$, so any randomized rounding produces a deadline miss with probability one. \textcolor{black}{Thus support-restricted rounding is safe only if $\mathcal J_t$ (or $\mathcal J_t^{\rm out}$ under outcome uncertainty) is non-empty; increasing the safety margin cannot create integral feasibility.}

\subsection{Robust Safe Projection Under Partial Channel Observability}
\label{sec:robust}

\begin{definition}[Robust safe set]
\label{def:robust}
If only $\mathcal{U}_t\subseteq[0,1]^{N\times N}$ with $C_t\in\mathcal{U}_t$ is known at slot $t$,
\begin{equation}
\mathcal{K}_{t,\rm rob}=\{x\in\Delta_N:Cx\ge\theta_t,\,\forall C\in\mathcal{U}_t\}.
\label{eq:Krob-def}
\end{equation}
\end{definition}

\begin{theorem}[Robust safety and regret]
\label{thm:robust}
If $\mathcal{K}_{t,\rm rob}\ne\varnothing$ for every $t$ and Algorithm~\ref{alg:main} uses $\mathcal{K}_{t,\rm rob}$ in place of $\mathcal{K}_t$, then for every realized sequence $C_t\in\mathcal{U}_t$ all conclusions of Theorem~\ref{thm:main} continue to hold; the static-regret bound applies for any $u\in\bigcap_t\mathcal{K}_{t,\rm rob}$.
\end{theorem}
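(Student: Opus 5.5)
The argument reuses the proof chain of Theorem~\ref{thm:main} with $\mathcal K_{t,\rm rob}$ in place of $\mathcal K_t$. The only new ingredient is the inclusion $\mathcal K_{t,\rm rob}\subseteq\mathcal K_t$ at every slot, where $\mathcal K_t$ is built from the realized $C_t\in\mathcal U_t$. This is immediate from Definition~\ref{def:robust}: any $x\in\mathcal K_{t,\rm rob}$ satisfies $Cx\ge\theta_t$ for all $C\in\mathcal U_t$, in particular for $C=C_t$. When $\mathcal U_t$ is a coordinatewise lower-bound box, this is also the content of Lemma~\ref{lem:mono-obs}(i). I would state this inclusion first, noting that $\theta_t$ is the same vector in both sets because it depends only on $A_t$ and $D$.

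Next I would redo the induction of Lemma~\ref{lem:safety}. Assume $1\le A_t^i\le D_i$. Then $\theta_t$ is well defined, and $\mathcal K_{t,\rm rob}$ is an intersection of $\Delta_N$ with (possibly infinitely many) closed half-spaces $\{x:(Cx)_i\ge\theta_{t,i}\}$, so it is closed and convex; it is non-empty by hypothesis. Lemma~\ref{lem:proj} then gives a unique $x_t=\Pi_{\mathcal K_{t,\rm rob}}(z_t)$. Because $x_t\in\mathcal K_t$ for the realized $C_t$, Lemma~\ref{lem:thr} gives $g_{t,i}(x_t)\le 0$, so the actual dynamics~\eqref{eq:aoi-dyn} driven by $C_t$ yield $A_{t+1}^i\le D_i$. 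The lower bound $A_{t+1}^i\ge1$ follows exactly as before from $C_t\in[0,1]^{N\times N}$. Lemma~\ref{lem:queue} then carries over verbatim, since its only input is $g_{t,i}(x_t)\le0$ under the realized $C_t$, so the queue stays at zero.

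For regret, Lemma~\ref{lem:onestep} holds with $\mathcal C=\mathcal K_{t,\rm rob}$ for every $u\in\mathcal K_{t,\rm rob}$. Its proof uses only that the shield set is closed, convex, and contained in $\Delta_N$, together with the simplex projection for $z_{t+1}$. Summing over $t$ for $u\in\bigcap_t\mathcal K_{t,\rm rob}$ and applying the Abel-summation step of Theorem~\ref{thm:main} gives~\eqref{eq:reg-general}--\eqref{eq:reg-unknown}. The loss sequence and subgradient bound are unaffected by the choice of shield set.

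The one point needing care is the closedness and convexity of $\mathcal K_{t,\rm rob}$ when $\mathcal U_t$ is infinite. I would handle it as above: each constraint $x\mapsto(Cx)_i$ is linear, so each constraint set is a closed half-space, and an arbitrary intersection of closed convex sets is closed and convex. Beyond that, the only risk is circularity: Lemma~\ref{lem:thr} must be applied to the realized $C_t$ only after the inductive hypothesis $A_t^i\ge1$ is in place, which the induction order above ensures.
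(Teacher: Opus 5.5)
Your proposal is correct and follows essentially the same route as the paper. The realized $C_t\in\mathcal U_t$ gives $\mathcal K_{t,\rm rob}\subseteq\mathcal K_t$, so the safety induction of Lemma~\ref{lem:safety} and the zero-queue certificate carry over, and the regret argument only needs $\mathcal K_{t,\rm rob}$ to be a non-empty closed convex subset of $\Delta_N$ for Lemma~\ref{lem:proj}. Your explicit check that closedness and convexity survive an infinite $\mathcal U_t$ (as an arbitrary intersection of closed half-spaces) is a slightly more careful version of the paper's one-line justification.
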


\begin{proof}
$\mathcal K_{t,\rm rob}$ is non-empty closed convex (intersection of $\Delta_N$ with closed half-spaces), so Lemma~\ref{lem:proj} applies and $x_t\in\mathcal K_{t,\rm rob}$ implies $C_tx_t\ge\theta_t$ since $C_t\in\mathcal U_t$. Lemma~\ref{lem:thr} gives $g_{t,i}(x_t)\le 0$, so the safety induction of Lemma~\ref{lem:safety} carries through. The regret proof is identical to Theorem~\ref{thm:main} with $\mathcal C=\mathcal K_{t,\rm rob}$ in Lemma~\ref{lem:proj}.
\end{proof}

\begin{corollary}[Rectangular lower-bound uncertainty]
\label{cor:rect}
If $\mathcal{U}_t=\{C\in[0,1]^{N\times N}:C\ge\underline{C}_t\}$, then $\mathcal{K}_{t,\rm rob}=\{x\in\Delta_N:\underline{C}_t x\ge\theta_t\}$.
\end{corollary}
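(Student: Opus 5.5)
The plan is to prove the set equality $\mathcal{K}_{t,\rm rob}=\{x\in\Delta_N:\underline{C}_t x\ge\theta_t\}$ by double inclusion. The key fact is that every $x\in\Delta_N$ is entrywise nonnegative, so each map $C\mapsto Cx$ is coordinatewise monotone in $C$. This is the same observation used in part (i) of Lemma~\ref{lem:mono-obs}.

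For the inclusion ``$\subseteq$'', I would first check that $\underline{C}_t$ itself belongs to $\mathcal{U}_t$. It satisfies $\underline{C}_t\ge\underline{C}_t$ trivially. It lies in $[0,1]^{N\times N}$ provided the lower bound is itself a valid coefficient matrix; I will state this as implicit in the corollary, since otherwise $\mathcal{U}_t$ could be empty or the bound would be vacuous. Once $\underline{C}_t\in\mathcal{U}_t$, any $x\in\mathcal{K}_{t,\rm rob}$ satisfies the defining inequality $Cx\ge\theta_t$ for the particular choice $C=\underline{C}_t$, which gives the inclusion directly.

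For the inclusion ``$\supseteq$'', I would take $x\in\Delta_N$ with $\underline{C}_t x\ge\theta_t$ and an arbitrary $C\in\mathcal{U}_t$. Since $C-\underline{C}_t\ge 0$ entrywise and $x\ge 0$, each coordinate satisfies
\[
(Cx)_i-(\underline{C}_t x)_i=\sum_j (C_{ij}-\underline{C}_{t,ij})x_j\ge 0.
\]
Hence $Cx\ge\underline{C}_t x\ge\theta_t$ for every $C\in\mathcal{U}_t$, and therefore $x\in\mathcal{K}_{t,\rm rob}$. Equivalently, this step is Lemma~\ref{lem:mono-obs}(i) applied with $C'_t=C$, read through the description of $\mathcal K_t(\cdot)$ in terms of $\theta_t$.

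The only delicate point is the membership $\underline{C}_t\in\mathcal{U}_t$, which requires $\underline{C}_t\in[0,1]^{N\times N}$. If $\underline{C}_t$ has negative entries, I would instead replace it by its clipped version $[\underline{C}_t]_+$, which is the entrywise minimum of $\mathcal{U}_t$. I would remark on this caveat in passing rather than treat it as a substantive obstacle, since the rest of the argument is a two-line monotonicity computation.
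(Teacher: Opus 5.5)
Your double-inclusion argument is correct and is essentially the paper's proof: for $\supseteq$, monotonicity of $C\mapsto Cx$ for $x\ge 0$ gives $Cx\ge\underline{C}_t x\ge\theta_t$; for $\subseteq$, you use the membership $\underline{C}_t\in\mathcal{U}_t$. Your caveat is well taken. That membership requires $\underline{C}_t\in[0,1]^{N\times N}$, a hypothesis the paper uses implicitly without stating it, and without it only the $\supseteq$ inclusion survives.
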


\begin{proof}
$\underline{C}_tx\ge\theta_t$ with $C\ge\underline C_t$ and $x\ge 0$ gives $Cx\ge\theta_t$, so $x\in\mathcal K_{t,\rm rob}$. The converse follows since $\underline C_t\in\mathcal U_t$.
\end{proof}

Theorem~\ref{thm:robust} closes the gap between idealized analysis and practical deployment, where the controller often only has access to channel state estimates. Lemma~\ref{lem:mono-obs} guarantees that any coordinatewise lower envelope $\underline C_t\le C_t$ produces a tighter, safer polyhedron, and Corollary~\ref{cor:rect} singles out the rectangular case in which the robust projection collapses to a standard QP with $\underline C_t$ in place of $C_t$, so the robust algorithm has the same per-slot complexity as the nominal one.

\subsection{Closed Competitive-Ratio Statements}
\label{sec:competitive}

In the canonical-loss regime, the cumulative algorithmic cost equals the weighted AoI:
\begin{equation}
J_T^{\rm alg}=\sum_{t=1}^T\sum_{i=1}^N w_iA_{t+1}^i=\sum_{t=1}^T f_t^{\rm can}(x_t).
\label{eq:Jalg}
\end{equation}
Let $J_T^{\rm off}$ denote the cumulative weighted AoI of the offline dynamic optimum, the policy that knows $(C_t)_{t=1}^T$ in advance and minimizes $J_T$ subject to the same fluid AoI dynamics.

\begin{theorem}[Deadline-induced competitive ratio]
\label{thm:cr-deadline}
Under Assumptions~\ref{ass:causal}--\ref{ass:viability} and the canonical loss,
\begin{equation}
J_T^{\rm off}\ge TW,\qquad J_T^{\rm alg}\le T\sum_{i=1}^N w_iD_i,
\label{eq:cr-bounds}
\end{equation}
hence
\begin{equation}
\dfrac{J_T^{\rm alg}}{J_T^{\rm off}}\le\rho_D:=\dfrac{\sum_i w_iD_i}{W}.
\label{eq:rho-D}
\end{equation}
With margin $\xi$, $D_i$ may be replaced by $D_i-\xi_i$.
\end{theorem}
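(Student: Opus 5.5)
The plan is to prove the two inequalities in \eqref{eq:cr-bounds} separately, each slot by slot, and then divide.

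\emph{Lower bound on $J_T^{\rm off}$.} The offline policy is subject to the same fluid dynamics \eqref{eq:aoi-dyn}. We show by induction that every feasible state satisfies $A_t^i\ge 1$. The base case is Assumption~\ref{ass:init}. For the step, note that $(Cx)_i\in[0,1]$ for any $C\in[0,1]^{N\times N}$ and $x\in\Delta_N$. This gives $A_{t+1}^i=1+(1-(C_tx)_i)A_t^i\ge 1$, which is exactly the argument of Lemma~\ref{lem:safety}, and it uses neither safety nor viability. Consequently every slot contributes $\sum_i w_iA_{t+1}^i\ge\sum_i w_i=W$. Summing over $t\in[T]$ gives $J_T^{\rm off}\ge TW$. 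The bound holds for any causal or offline policy, so in particular for the offline minimizer.

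\emph{Upper bound on $J_T^{\rm alg}$.} By \eqref{eq:Jalg}, $J_T^{\rm alg}=\sum_t\sum_i w_iA_{t+1}^i$. Under Assumptions~\ref{ass:causal}--\ref{ass:viability}, Lemma~\ref{lem:safety} (equivalently, the safety part of Theorem~\ref{thm:main}) gives $A_{t+1}^i\le D_i$ for all $t\in[T]$ and $i\in[N]$. Since $w_i\ge 0$, each slot contributes at most $\sum_i w_iD_i$, so $J_T^{\rm alg}\le T\sum_i w_iD_i$.

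\emph{Ratio and margin.} We have $W>0$ as long as the weights are not all zero; the degenerate case $w=0$ is trivial or excluded. Then $J_T^{\rm off}\ge TW>0$, and dividing the two bounds gives \eqref{eq:rho-D}. For the margin variant, run the algorithm on $\mathcal K_t^\xi$. Theorem~\ref{thm:margin} then gives $A_{t+1}^i\le D_i-\xi_i$, which replaces $D_i$ in the upper bound. The lower bound on $J_T^{\rm off}$ does not change, because it never used the deadlines.

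The only step needing care is the first. We must confirm that $A\ge 1$ holds for the offline comparator even though it is not shielded, and that the offline optimum is well defined; the bound applies to its infimum anyway. Everything else follows directly from the earlier lemmas.
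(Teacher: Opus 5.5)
Your proposal is correct and follows essentially the same route as the paper. You induct on the fluid dynamics to get $A_{t+1}^{i}\ge 1$ for any policy, hence $J_T^{\rm off}\ge TW$; you invoke the safety invariant of Lemma~\ref{lem:safety}/Theorem~\ref{thm:main} for $J_T^{\rm alg}\le T\sum_i w_iD_i$; and you divide. Your explicit treatment of $W>0$, and of the margin case via Theorem~\ref{thm:margin} under its hypotheses, fills in details the paper's short proof leaves implicit.
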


\begin{proof}
For any policy $\pi$ and slot $t$, $0\le(C_tx_t^\pi)_i\le 1$, so by induction $A_{t+1}^{i,\pi}\ge 1$ and $\sum_i w_iA_{t+1}^{i,\pi}\ge W$, hence $J_T^{\rm off}\ge TW$. Theorem~\ref{thm:main} gives $A_{t+1}^i\le D_i$, so $J_T^{\rm alg}\le T\sum_iw_iD_i$. Dividing yields~\eqref{eq:rho-D}.
\end{proof}

The ratio $\rho_D$ is data-only, in the sense that it is computable from $w$ and $D$ alone, without any knowledge of the channel realization or the loss sequence; this is exactly the regime in which classical adversarial AoI competitive ratios are usually stated, but \textcolor{black}{$\rho_D$ additionally respects the modeled-state feasibility scope of Theorem~\ref{thm:main}}, which is unavailable to soft-cost competitive analyses such as~\cite{banerjee2020adversarial,bhattacharjee2023competitive}.

\begin{theorem}[Regret-refined trace-wise ratio]
\label{thm:cr-trace}
Suppose the canonical loss is used and $\mathcal{K}_{1:T}\ne\varnothing$. Define the trace-wise static safe benchmark
\begin{equation}
J_T^{\rm stat}=\min_{u\in\mathcal{K}_{1:T}}\sum_{t=1}^T f_t^{\rm can}(u),
\label{eq:Jstat}
\end{equation}
and let $\alpha_T:=J_T^{\rm stat}/J_T^{\rm off}$. With $\eta=R/(G\sqrt{T})$, Algorithm~\ref{alg:main} satisfies
\begin{equation}
J_T^{\rm alg}\le J_T^{\rm stat}+RG\sqrt{T},
\label{eq:cr-trace-1}
\end{equation}
\begin{equation}
\dfrac{J_T^{\rm alg}}{J_T^{\rm off}}\le\alpha_T+\dfrac{RG}{W\sqrt{T}}.
\label{eq:cr-trace-2}
\end{equation}
\end{theorem}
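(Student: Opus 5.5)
The plan is to derive both displays directly from the static-regret bound of Theorem~\ref{thm:main}, specialized to the canonical loss, and then divide by the offline lower bound of Theorem~\ref{thm:cr-deadline}.

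\emph{Step 1: first inequality~\eqref{eq:cr-trace-1}.} By~\eqref{eq:Jalg}, $J_T^{\rm alg}=\sum_t f_t^{\rm can}(x_t)$. Assumption~\ref{ass:loss} holds for $f_t^{\rm can}$ with $G=G_{\rm can}$. This is justified by Lemma~\ref{lem:lip}, since Lemma~\ref{lem:safety} guarantees $A_t^i\le D_i$ along the trajectory; here I read $G$ in the statement as this constant, or as any valid upper bound on it. Because $\mathcal K_{1:T}$ is a non-empty compact set (Lemma~\ref{lem:mono-obs}(ii)) and $\sum_t f_t^{\rm can}$ is affine and hence continuous, the minimum defining $J_T^{\rm stat}$ is attained at some $u^\star\in\mathcal K_{1:T}$. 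Applying~\eqref{eq:reg-known} with $u=u^\star$ gives
\[
J_T^{\rm alg}-J_T^{\rm stat}=\operatorname{Reg}_T(u^\star)\le RG\sqrt T.
\]

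\emph{Step 2: second inequality~\eqref{eq:cr-trace-2}.} Divide Step~1 by $J_T^{\rm off}$, which is positive by Theorem~\ref{thm:cr-deadline} since $J_T^{\rm off}\ge TW>0$ when $W>0$. This gives
\[
\frac{J_T^{\rm alg}}{J_T^{\rm off}}\le\alpha_T+\frac{RG\sqrt T}{J_T^{\rm off}}\le\alpha_T+\frac{RG\sqrt T}{TW}=\alpha_T+\frac{RG}{W\sqrt T}.
\]

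\emph{Main obstacle.} The only delicate point is well-definedness. $J_T^{\rm off}$ must be positive, which requires assuming $W>0$; the degenerate case $w=0$ is trivial and excluded. The offline policy also needs to exist. If it does not, I would define $J_T^{\rm off}$ as an infimum, and the lower bound $TW$ still holds, because the argument of Theorem~\ref{thm:cr-deadline} applies policy by policy.

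Note that $\alpha_T\ge 1$ need not be argued: the statement only bounds the ratio and never compares $\alpha_T$ with the offline optimum. The proof is therefore a short corollary of Theorems~\ref{thm:main} and~\ref{thm:cr-deadline}.
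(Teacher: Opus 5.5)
Your proposal is correct and follows the paper's proof. Both apply the known-horizon static-regret bound of Theorem~\ref{thm:main} at a minimizer $u^\star\in\mathcal K_{1:T}$ to get the additive bound, then divide by $J_T^{\rm off}\ge TW$ from Theorem~\ref{thm:cr-deadline}; your extra care about attainment of the minimum, $W>0$, and taking $G=G_{\rm can}$ via Lemma~\ref{lem:lip} makes explicit what the paper leaves implicit.
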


\begin{proof}
Let $u_T^\star\in\arg\min_{u\in\mathcal{K}_{1:T}}\sum_t f_t^{\rm can}(u)$. Theorem~\ref{thm:main} with $\eta=R/(G\sqrt T)$ gives $J_T^{\rm alg}-J_T^{\rm stat}\le RG\sqrt T$, the first inequality. Dividing by $J_T^{\rm off}\ge TW$ from Theorem~\ref{thm:cr-deadline} gives the second.
\end{proof}

The deadline-induced ratio $\rho_D$ depends only on weights and deadlines, which is the tightest data-only ratio achievable for a hard-safety algorithm. The trace-wise ratio sharpens this whenever a single safe action does well across the trace, with an additive $O(1/\sqrt T)$ deviation from $\alpha_T$, and it is the only competitive bound we are aware of that is simultaneously refined by no-regret learning, certified for adversarial channels, and tight against the offline dynamic optimum on traces where $\alpha_T$ approaches one.

\subsection{Why Long-Term Virtual Queues Cannot Replace the Hard Shield}
\label{sec:why-shield}

\begin{proposition}[Sublinear cumulative violation $\not\Rightarrow$ zero stage-wise violation]
\label{prop:sublinear}
There exist trajectories of a long-term-constraint OCO algorithm with $\sum_{t=1}^T[g_t(x_t)]_+=O(1)$ and yet $[g_{t_0}(x_{t_0})]_+>0$ for some $t_0\in[T]$. In particular, sublinear or even bounded cumulative violation does not imply $(1,1)$-firm peak-AoI safety.
\end{proposition}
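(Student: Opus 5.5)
The proposition is an existence claim, so I will prove it with an explicit construction rather than an analysis of a generic long-term-constraint method. I will use a single-sensor instance ($N=1$ suffices, or $N=2$ with an idle second coordinate to keep $\Delta_N$ nontrivial). Take $N=2$, $B_t=I$, and $D_1=2$, $A_1^1=1$. With a suitable $C_t$ (for instance $C_{t,11}=1$, $C_{t,12}=0$), Lemma~\ref{lem:thr} makes $g_t(x)=A_t^1+1-D_1-A_t^1x_1$ an affine function whose sign is controlled entirely by the allocation $x_{t,1}$. I will then take the LT-VQ update of \S\ref{sec:shield-vs-vq}, $x_t\in\arg\min_x Vf_t(x)+Q_tg_t(x)$ over $\Delta_2$, paired with a linear loss that rewards $x_{t,2}$, say $f_t(x)=x_1$.

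The trajectory works as follows. At $t=1$ we have $Q_1=0$, so the minimizer ignores the constraint and picks $x_1=e_2$. This gives $g_1(e_2)=A_1^1+1-D_1=0$, which is harmless, and $A_2^1=2$. At $t_0=2$ the queue is still $Q_2=[0+0]_+=0$, so the algorithm again picks $e_2$. Now $g_2(e_2)=2+1-2=1>0$, and $A_3^1=3>D_1$. This is the witnessed stage-wise violation, and $Q_3=1$.

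From $t=3$ onward I will switch the loss to $f_t\equiv 0$, or equivalently make $V$ small relative to $Q_t$. The queue term then dominates, and the minimizer is $x_t=e_1$, which gives full refresh: $A_{t+1}^1=1$ and $g_t(e_1)=A_t^1+1-2-A_t^1=-1\le 0$. Every later violation is therefore zero. The cumulative violation is $\sum_t[g_t(x_t)]_+=1=O(1)$, while $[g_{t_0}(x_{t_0})]_+=1>0$. The statement only asserts that such trajectories exist, so an adversarially chosen loss sequence is legitimate.

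To finish, I will apply Corollary~\ref{cor:mk}, whose converse says that $(1,1)$-firm is equivalent to $A_t^i\le D_i$ at every slot. Since $A_3^1>D_1$, the sequence is not $(1,1)$-firm, and the bounded cumulative violation therefore does not certify it.

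The main obstacle is bookkeeping rather than depth. I need to make sure the witnessed violation is consistent with the algorithm's actual argmin, which means handling ties: when $Q_t=0$ the loss must strictly prefer $e_2$. I also need to check that the post-violation AoI of $3$ still lets full refresh bring $g$ back to nonpositive, which the computation above confirms. If one insists on a single fixed $V=\Theta(\sqrt T)$ and a fixed loss, the fallback is to take $f_t(x)=x_1$ throughout. The argmin then returns $e_1$ once $Q_t\ge V$ (roughly), after which $g_t\le 0$. The total violation becomes a constant depending on $V$, still $O(1)$ in the trajectory sense once $V$ is fixed, and still positive at $t_0=2$.
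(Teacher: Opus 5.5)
Your main construction is correct and rests on the same idea as the paper: a trajectory with a single unit violation and none afterwards. The paper posits such a trajectory abstractly, with $[g_1(x_1)]_+=1$ and $[g_t(x_t)]_+=0$ for $t\ge 2$. It then asserts, without checking any argmin, that a drift-plus-penalty scheduler with an empty queue realizes it. You instead fix $V$, $C_t$ and $f_t$ and verify the DPP iterates slot by slot: $x_1=x_2=e_2$, $g_1=0$, $g_2=1$, $A_3^1=3>D_1$. That gives a checkable witness. The paper's route buys only brevity, since its conclusion rests on the trivial fact that a bounded sum of nonnegative terms can contain a positive term. You should also fix $D_2\ge T+2$, so that the idle second sensor's constraint never binds.

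Two points need repair. First, the claim that from $t=3$ on ``the queue term dominates'' holds only at $t=3$. There $Q_3=1$ and the objective $2-3x_1$ has unique minimizer $e_1$. But then $Q_4=[1+g_3(e_1)]_+=[1-1]_+=0$, and with $f_t\equiv 0$ the DPP objective is identically zero for $t\ge 4$. Every point of $\Delta_2$ is then a minimizer, and choosing $e_2$ at $t=4,5$ reproduces a violation, since $g_5(e_2)=1$. Either state explicitly that $e_1$ is the selected element of the argmin, which is legitimate for an existence claim, or use a loss that strictly favours $x_1$ for $t\ge 3$, such as $f_t(x)=-x_1$. Because $Q_t g_t(x)$ is nonincreasing in $x_1$, $e_1$ is then the unique minimizer. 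This gives $A_t^1=1$, $g_t(e_1)=-1$ and $Q_t=0$ for all $t\ge 4$.

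Second, your fallback with fixed $f_t(x)=x_1$ and fixed $V$ is wrong. The DPP coefficient on $x_1$ is $V-Q_tA_t^1$, and after each refresh the queue drains and the argmin returns to $e_2$. For example, with $V=10$ one gets $Q_9=4$. From $t=9$ on the iterates cycle $e_2,e_2,e_1$ with $g$-values $0,1,-1$, so sensor~1's AoI reaches $3>D_1$ every third slot. Hence $\sum_t[g_t(x_t)]_+$ grows linearly in $T$, not as a constant depending on $V$. DPP controls only the signed sum $\sum_t g_t(x_t)\le Q_{T+1}$, not its positive part. Your main argument does not need the fallback, so drop that paragraph.
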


\begin{proof}
Consider any trajectory with $[g_1(x_1)]_+=1$ and $[g_t(x_t)]_+=0$ for all $t\ge 2$. Then $\sum_{t=1}^T[g_t(x_t)]_+=1=O(1)$ for every $T$, satisfying any sublinear cumulative-violation guarantee, but the slot-1 violation $[g_1(x_1)]_+>0$ produces $A_2^i>D_i$ for some sensor $i$, breaking peak-AoI safety and the corresponding $(1,1)$-firm requirement. Such trajectories are realized by drift-plus-penalty schedulers operating with an empty initial queue under an adversarial $C_1$ that makes $\mathcal K_1$ a tight singleton: the unconstrained gradient step lands outside $\mathcal K_1$, the queue absorbs the violation in subsequent slots, and the cumulative sum stays $O(1)$ even though the deadline is missed at $t=1$.
\end{proof}

A single missed deadline can already trip a circuit breaker, destabilize a closed-loop controller, or violate a $(1,1)$-firm requirement, so the hard projection in Algorithm~\ref{alg:main} is a structural necessity rather than a convenience: the queue $Q_t$ is repurposed as an a-posteriori certificate, while safety itself is conferred by the per-slot shield $\Pi_{\mathcal K_t}$. This shield-and-dual decomposition loses nothing in regret order while gaining everything in real-time semantics.

\section{Experiments}
\label{sec:experiments}

We evaluate OCO-PAoI-Hard on an adversarial shared-channel testbed targeting the three guarantees: \textcolor{black}{modeled fluid-state PAoI safety} (Thm.~\ref{thm:main}), $O(\sqrt T)$ regret (Thms.~\ref{thm:main},~\ref{thm:dyn}), and margin-safe robustness (Thm.~\ref{thm:margin-eps}). Default settings: $N=4$, $D=(6,7,8,9)$, $w=\mathbf 1$, $A_1=\mathbf 1$, $\eta_t=R/(G\sqrt t)$ with $R=\sqrt 2$ and $G$ from Lemma~\ref{lem:lip}; each number is averaged over $10$ seeds. Baselines: \emph{Vanilla OGD} (simplex-projected gradient without shield, same step size, so any safety gap is attributable to the missing shield); \emph{Long-term VQ} (drift-plus-penalty $\arg\min_x V f_t(x)+\sum_i Q_{t,i}g_{t,i}(x)$ with $V=\sqrt T$, the standard parameter achieving $O(\sqrt T)$ regret and $O(\sqrt T)$ cumulative violation); \emph{Greedy max-deficit} ($\arg\min_i(D_i-A_t^i)$); and \emph{Round-robin}. The trap channel fixes a safe column of $C_t$ with entries $0.22$ and a trap column in $\{0,1\}$ aligned with the canonical gradient, then biases the gradient by $\sum_iw_iD_i$ toward the trap, pulling any gradient-follower off the safe column; $\mathcal K_t$ remains non-empty at every slot.

\begin{figure}[!t]
\centering
\includegraphics[width=\linewidth]{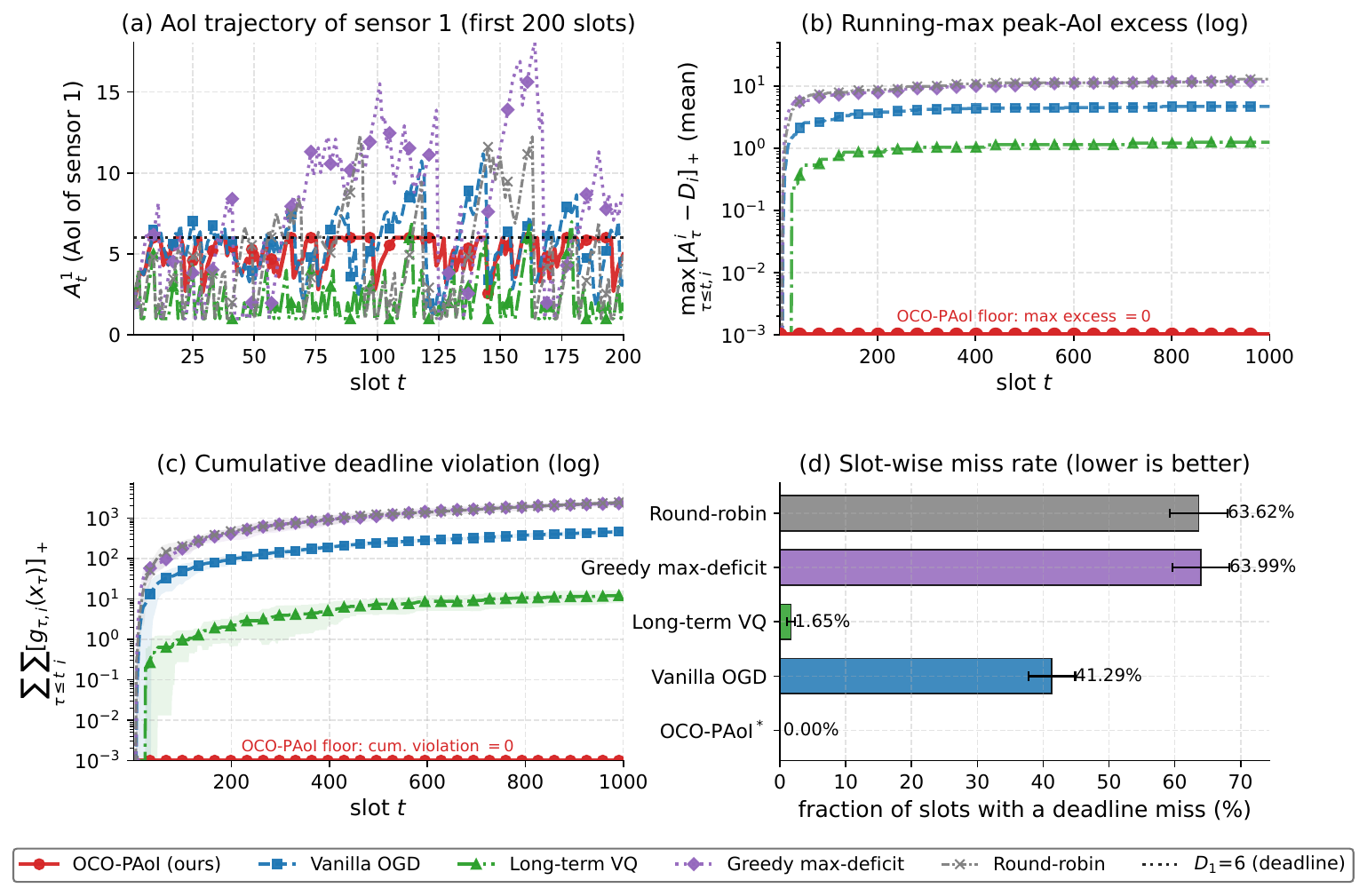}
\caption{\textcolor{black}{Modeled fluid-state peak-AoI safety} on the adversarial trap channel ($N\!=\!4$, $T\!=\!1000$, $10$ seeds, $D=(6,7,8,9)$). (a) AoI trajectory of sensor~1 during the first $200$ slots, with deadline $D_1\!=\!6$ (dotted); only OCO-PAoI-Hard (red) stays at or below it. (b) Running-max excess $\max_{\tau\le t,\,i}[A_\tau^i-D_i]_+$ (log scale; visible red floor is a plotting offset, true value $0$). (c) Cumulative deadline violation $\sum_{\tau\le t,\,i}[g_{\tau,i}(x_\tau)]_+$ (log scale). (d) Per-seed slot-wise miss rate.}
\label{fig:exp1}
\end{figure}

\begin{table}[t]
\caption{\textcolor{black}{Fluid-state PAoI safety} on the four-sensor adversarial trap channel ($T\!=\!1000$, $10$ seeds). ``Miss'' is the per-slot miss rate; ``Excess'' is $\max_{t,i}[A_t^i-D_i]_+$; ``CumViol'' is $\sum_{t,i}[g_{t,i}(x_t)]_+$; ``Cost'' is $\sum_t f_t^{\rm can}(x_t)$. Bold marks the only zero-violation entry. Long-term~VQ reaches the lowest cost only by accepting per-slot violations.}
\label{tab:safety-main}
\centering
\footnotesize
\renewcommand{\arraystretch}{1.12}
\setlength{\tabcolsep}{3pt}
\begin{tabular}{|l|c|c|c|c|}
\hline
Method & Miss & Excess & CumViol & Cost\\
\hline
Vanilla OGD               & $41.3\%$ & $4.70$  & $469.1$  & $20{,}703$\\
\hline
Long-term VQ              & $1.65\%$ & $1.26$  & $12.2$   & $\mathbf{7{,}950}$\\
\hline
Greedy max-deficit        & $64.0\%$ & $11.8$  & $2{,}358$ & $20{,}563$\\
\hline
Round-robin               & $63.6\%$ & $12.96$ & $2{,}469$ & $20{,}674$\\
\hline
\textbf{Ours}             & $\mathbf{0.0\%}$ & $\mathbf{1.3\text{e-}15}$ & $\mathbf{2.3\text{e-}14}$ & $20{,}075$\\
\hline
\end{tabular}
\end{table}

On $T=1000$ (Fig.~\ref{fig:exp1}, Table~\ref{tab:safety-main}), OCO-PAoI-Hard records \textcolor{black}{zero modeled fluid-state deadline violations} across all $10$ seeds, with maximum AoI excess $1.3\!\times\!10^{-15}$ (round-off) and exactly zero cumulative violation. The four baselines miss $1.65$--$64.0\%$ of slots with overshoots up to $12.96$. Long-term~VQ achieves a lower weighted-AoI cost ($7{,}950$ vs.\ $20{,}075$) only by accepting per-slot violations (Prop.~\ref{prop:sublinear}); among violation-free methods, ours has the lowest cost, beating Vanilla~OGD, Greedy max-deficit, and Round-robin by $3$ to $3.6\%$.

\begin{figure}[!t]
\centering
\includegraphics[width=\linewidth]{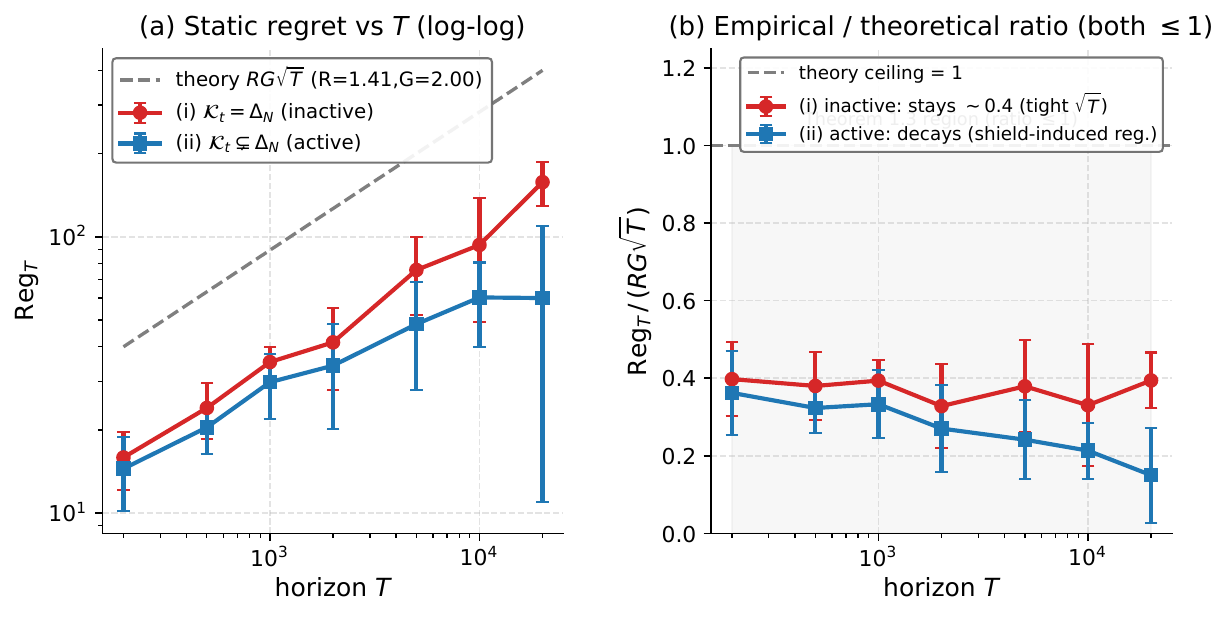}
\caption{Sublinear regret scaling under both inactive and active safe sets ($N\!=\!4$, $10$ seeds per $T$). (a) Static regret $\operatorname{Reg}_T$ vs.\ $T$ (log-log) against $RG\sqrt{T}$ (gray dashed); empirical slopes $0.484$ (inactive) and $0.324$ (active). (b) Normalized ratio $\operatorname{Reg}_T/(RG\sqrt{T})$ stays below the theoretical ceiling $1$ at every $T$.}
\label{fig:exp2}
\end{figure}

\emph{Sublinear regret.} We sweep $T\in\{200,500,10^3,2\!\times\!10^3,5\!\times\!10^3,10^4,2\!\times\!10^4\}$ in two regimes. The \emph{inactive} regime ($D_i=T+10$, so $\mathcal K_t=\Delta_N$) reduces the algorithm to OGD against adversarial Rademacher linear losses, isolating unconstrained-OCO behavior. The \emph{active} regime keeps $D=(6,7,8,9)$ and uses the static safe comparator $u^\star\in\arg\min_{u\in\mathcal K_{1:T}}\sum_t f_t^{\rm can}(u)$ computed by LP (\texttt{linprog}/HiGHS) on $\mathcal K_{1:T}$. Fig.~\ref{fig:exp2} confirms the $\sqrt T$ rate: empirical log-log slopes are $0.484$ (inactive) and $0.324$ (active), both below the $0.5$ ceiling of Thm.~\ref{thm:main}. The normalized ratio $\operatorname{Reg}_T/(RG\sqrt T)$ stays strictly below $1$, near $0.40$ in the inactive regime and decaying from $0.36$ to $0.15$ in the active regime, indicating the shield acts as an implicit regularizer; this matches the trace-wise refinement of Thm.~\ref{thm:cr-trace}: when a single safe action $u^\star$ already does well, the additive $RG/(W\sqrt T)$ term vanishes faster than the worst-case $O(1/\sqrt T)$.

\begin{figure}[!t]
\centering
\includegraphics[width=\linewidth]{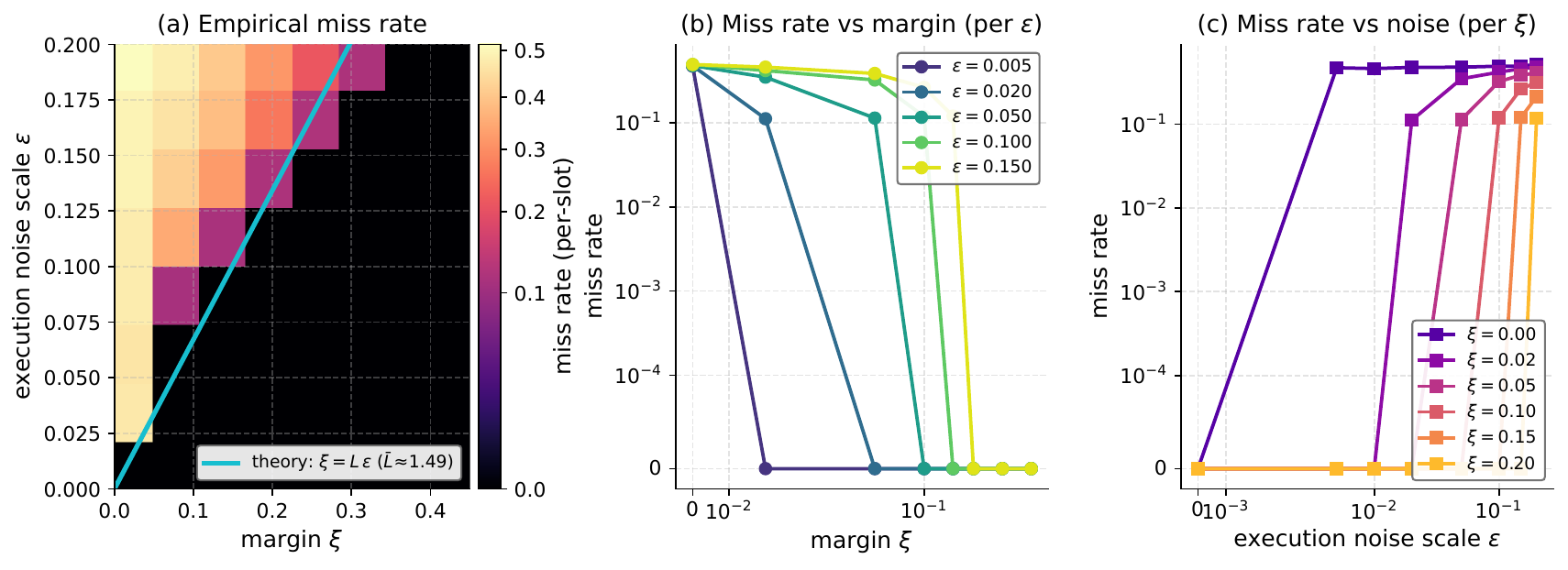}
\caption{Margin-safe robustness to execution noise ($N\!=\!4$, $T\!=\!400$, $5$ seeds per cell, $8\!\times\!8$ grid). (a) Empirical miss rate over $(\xi,\varepsilon)$; cyan line is $\xi=\bar L\,\varepsilon$ with $\bar L\approx 1.49$. Every $(\xi,\varepsilon)$ above the line has miss rate exactly zero. (b) Slices over margin; (c) slices over noise.}
\label{fig:exp3}
\end{figure}

\emph{Margin-safe robustness.} We compute $x_t\in\mathcal K_t^\xi$ but execute $\hat x_t=\Pi_{\Delta_N}(x_t+\varepsilon n_t)$ with $n_t$ unit-norm Gaussian, on a tight-channel construction (safe-column $0.15$, $D=(10,10,10,10)$). Sweeping $(\xi,\varepsilon)$ on an $8\!\times\!8$ grid ($64$ cells, $5$ seeds, $T=400$), Fig.~\ref{fig:exp3}(a) reveals a sharp zero-miss region right of $\xi=\bar L\varepsilon$ with $\bar L\approx 1.49$ the trace-averaged Lipschitz constant of Thm.~\ref{thm:margin-eps}. Without margin, even $\varepsilon=0.005$ drives miss rate to $47\%$; with $\xi\ge\bar L\varepsilon$ every cell achieves zero miss rate, matching the theoretical boundary to within one grid cell.

Three takeaways: (i) the hard projection is the only mechanism tested that \textcolor{black}{eliminates modeled fluid-state misses on every simulated trace}, while long-term VQ still violates $1.65\%$ of slots (Prop.~\ref{prop:sublinear}); (ii) the empirical regret slope sits well below $1/2$, suggesting the shield is an implicit regularizer in the active regime and that the constants in Thm.~\ref{thm:main} are not tight on benign traces; (iii) $\xi\ge L\varepsilon$ aligns with the empirical phase boundary to within one grid cell, validating Thm.~\ref{thm:margin-eps} as a deployable design rule that converts numerical and physical noise budgets directly into a margin shrinkage. The cost gap $20{,}075/7{,}950\approx 2.53$ is the price of \textcolor{black}{fluid-trajectory safety}, well within the deadline-induced bound $\rho_D=\sum_iw_iD_i/W=7.5$ of Thm.~\ref{thm:cr-deadline}.

\section{Conclusion}

We presented OCO-PAoI-Hard, a causal online scheduler that delivers \textcolor{black}{zero modeled-state peak-AoI violation} and $O(\sqrt{T})$ regret against adversarial channel and arrival sequences, \textcolor{black}{conditional on the encountered safe sets remaining non-empty}. \textcolor{black}{This guarantee concerns the modeled AoI state; packet-level safety requires stronger service assumptions.} The framework rests on a structural reduction that recasts the per-sensor peak-AoI deadline as an affine half-space constraint $C_t x \ge \theta_t$ on the fractional \textcolor{black}{resource-allocation vector}: one Euclidean projection onto the current-slot safe set acts as a \textcolor{black}{modeled-state} safety shield, while the gradient step preserves the $\sqrt{T}$ no-regret rate. Closed-form static and dynamic regret bounds, a matching minimax lower bound, margin and approximate-projection theorems, and two competitive-ratio statements together cover every operating mode, and experiments on a multi-sensor adversarial trap channel confirm \textcolor{black}{zero modeled fluid-state deadline violations}, sublinear empirical regret, and a margin-noise phase boundary matching the closed-form prediction to within one grid cell. Packet-level rounding, multi-cell coordination, and downstream control-loop stability are natural next directions.

% Acknowledgments are omitted for double-blind review and will be added in the camera-ready version.

% Keep newly added camera-ready references black together with their in-text discussion.
\makeatletter
\let\RTSS@bibitem\bibitem
\renewcommand{\bibitem}[1]{%
  \def\RTSS@thiskey{#1}%
  \def\RTSS@dynamickey{zhang2026dynamic}%
  \def\RTSS@multikey{zhang2026multiconstraint}%
  \def\RTSS@noisekey{zhang2026noiseadaptivehighprobabilityregretbounds}%
  \ifx\RTSS@thiskey\RTSS@dynamickey
    \color{black}%
  \else\ifx\RTSS@thiskey\RTSS@multikey
    \color{black}%
  \else\ifx\RTSS@thiskey\RTSS@noisekey
    \color{black}%
  \else
    \color{black}%
  \fi\fi\fi
  \RTSS@bibitem{#1}}
\makeatother

\bibliographystyle{IEEEtran}
\bibliography{references}

@IEEEtranBSTCTL{IEEEtranBSTcontrol,
  CTLdash_repeated_names = {no}
}

@inproceedings{kaul2012aoi,
  title={Real-time status: How often should one update?},
  author={Kaul, Sanjit and Yates, Roy and Gruteser, Marco},
  booktitle={2012 Proceedings IEEE INFOCOM},
  pages={2731--2735},
  year={2012},
  organization={IEEE}
}

@article{sun2017aoisurvey,
  title={Age of information: An introduction and survey},
  author={Yates, Roy D and Sun, Yin and Brown, D Richard and Kaul, Sanjit K and Modiano, Eytan and Ulukus, Sennur},
  journal={IEEE Journal on Selected Areas in Communications},
  volume={39},
  number={5},
  pages={1183--1210},
  year={2021},
  publisher={IEEE}
}

@standard{iec61850,
  title={Communication networks and systems for power utility automation---Part 5: Communication requirements for functions and device models},
  organization={International Electrotechnical Commission},
  number={{IEC 618{50}-5:2013+AMD1:2022 CSV}},
  year={2022},
  note={Edition 2.1},
  url={https://webstore.iec.ch/en/publication/75090}
}

@techreport{tr38913,
  author={{3GPP}},
  title={Study on Scenarios and Requirements for Next Generation Access Technologies},
  institution={3rd Generation Partnership Project},
  number={TR 38.913 V19.0.0},
  month=oct,
  year={2025},
  url={https://portal.3gpp.org/desktopmodules/Specifications/SpecificationDetails.aspx?specificationId=2996}
}

@inproceedings{hsu2018whittle,
  title={Age of information: Whittle index for scheduling stochastic arrivals},
  author={Hsu, Yu-Pin},
  booktitle={2018 IEEE International Symposium on Information Theory (ISIT)},
  pages={2634--2638},
  year={2018},
  organization={IEEE}
}

@article{maatouk2022whittle,
  title={On the optimality of the Whittle’s index policy for minimizing the age of information},
  author={Maatouk, Ali and Kriouile, Saad and Assaad, Mohamad and Ephremides, Anthony},
  journal={IEEE Transactions on Wireless Communications},
  volume={20},
  number={2},
  pages={1263--1277},
  year={2021},
  publisher={IEEE}
}

@article{tripathi2021whittle,
  title={A whittle index approach to minimizing functions of age of information},
  author={Tripathi, Vishrant and Modiano, Eytan},
  journal={IEEE/ACM Transactions on Networking},
  volume={32},
  number={6},
  pages={5144--5158},
  year={2024},
  publisher={IEEE}
}

@inproceedings{leng2019drl,
  title={Age of information minimization for wireless ad hoc networks: A deep reinforcement learning approach},
  author={Leng, Shiyang and Yener, Aylin},
  booktitle={2019 IEEE Global Communications Conference (GLOBECOM)},
  pages={1--6},
  year={2019},
  organization={IEEE}
}

@article{ceran2021drlaoi,
  title={A reinforcement learning approach to age of information in multi-user networks with HARQ},
  author={Ceran, Elif Tu{\u{g}}{\c{c}}e and G{\"u}nd{\"u}z, Deniz and Gy{\"o}rgy, Andr{\'a}s},
  journal={IEEE Journal on Selected Areas in Communications},
  volume={39},
  number={5},
  pages={1412--1426},
  year={2021},
  publisher={IEEE}
}

@article{abdelmagid2022drlsurvey,
  title={On the role of age of information in the Internet of Things},
  author={Abd-Elmagid, Mohamed A and Pappas, Nikolaos and Dhillon, Harpreet S},
  journal={IEEE Communications Magazine},
  volume={57},
  number={12},
  pages={72--77},
  year={2019},
  publisher={IEEE}
}

@inproceedings{banerjee2020adversarial,
  title={Fundamental limits of age-of-information in stationary and non-stationary environments},
  author={Banerjee, Subhankar and Bhattacharjee, Rajarshi and Sinha, Abhishek},
  booktitle={2020 IEEE International Symposium on Information Theory (ISIT)},
  pages={1741--1746},
  year={2020},
  organization={IEEE}
}

@article{bhattacharjee2023competitive,
  title={Competitive algorithms for minimizing the maximum age-of-information},
  author={Bhattacharjee, Rajarshi and Sinha, Abhishek},
  journal={ACM SIGMETRICS Performance Evaluation Review},
  volume={48},
  number={2},
  pages={6--8},
  year={2020},
  publisher={ACM New York, NY, USA}
}

@article{bhuyan2022adv,
  title={Optimizing age-of-information in adversarial and stochastic environments},
  author={Sinha, Abhishek and Bhattacharjee, Rajarshi},
  journal={IEEE Transactions on Information Theory},
  volume={68},
  number={10},
  pages={6860--6880},
  year={2022},
  publisher={IEEE}
}

@article{yu2017online,
  title={Online convex optimization with stochastic constraints},
  author={Yu, Hao and Neely, Michael and Wei, Xiaohan},
  journal={Advances in Neural Information Processing Systems},
  volume={30},
  year={2017}
}

@inproceedings{jenatton2016ococonstraint,
  title={Adaptive algorithms for online convex optimization with long-term constraints},
  author={Jenatton, Rodolphe and Huang, Jim and Archambeau, C{\'e}dric},
  booktitle={International Conference on Machine Learning},
  pages={402--411},
  year={2016},
  organization={PMLR}
}

@article{mahdavi2012trading,
  title={Trading regret for efficiency: online convex optimization with long term constraints},
  author={Mahdavi, Mehrdad and Jin, Rong and Yang, Tianbao},
  journal={The Journal of Machine Learning Research},
  volume={13},
  number={81},
  pages={2503--2528},
  year={2012},
  publisher={JMLR. org}
}

@article{bernat2001weaklyhard,
  title={Weakly hard real-time systems},
  author={Bernat, Guillem and Burns, Alan and Llamos{\'{\i}}, Albert},
  journal={IEEE transactions on Computers},
  volume={50},
  number={4},
  pages={308--321},
  year={2001},
  publisher={IEEE}
}

@article{hammadeh2017mkfirm,
  title={Weakly-hard real-time guarantees for earliest deadline first scheduling of independent tasks},
  author={Hammadeh, Zain AH and Quinton, Sophie and Ernst, Rolf},
  journal={ACM Transactions on Embedded Computing Systems (TECS)},
  volume={18},
  number={6},
  pages={1--25},
  year={2019},
  publisher={ACM New York, NY, USA}
}

@article{sun2023mkfirm,
  title={Weakly hard real-time model for control systems: A survey},
  author={Salamun, Karla and Pavi{\'c}, Ivan and D{\v{z}}apo, Hrvoje and {\v{C}}uljak, Ivana},
  journal={Sensors},
  volume={23},
  number={10},
  pages={4652},
  year={2023},
  publisher={MDPI}
}

@article{sun2019aoi,
  title={Update or wait: How to keep your data fresh},
  author={Sun, Yin and Uysal-Biyikoglu, Elif and Yates, Roy D and Koksal, C Emre and Shroff, Ness B},
  journal={IEEE Transactions on Information Theory},
  volume={63},
  number={11},
  pages={7492--7508},
  year={2017},
  publisher={IEEE}
}

@article{bedewy2021aoi,
  title={Optimal sampling and scheduling for timely status updates in multi-source networks},
  author={Bedewy, Ahmed M and Sun, Yin and Kompella, Sastry and Shroff, Ness B},
  journal={IEEE Transactions on Information Theory},
  volume={67},
  number={6},
  pages={4019--4034},
  year={2021},
  publisher={IEEE}
}

@article{kadota2018scheduling,
  title={Scheduling policies for minimizing age of information in broadcast wireless networks},
  author={Kadota, Igor and Sinha, Abhishek and Uysal-Biyikoglu, Elif and Singh, Rahul and Modiano, Eytan},
  journal={IEEE/ACM Transactions on Networking},
  volume={26},
  number={6},
  pages={2637--2650},
  year={2018},
  publisher={IEEE}
}

@inproceedings{zinkevich2003ogd,
  title={Online convex programming and generalized infinitesimal gradient ascent},
  author={Zinkevich, Martin},
  booktitle={Proceedings of the 20th international conference on machine learning (icml-03)},
  pages={928--936},
  year={2003}
}

@article{hazan2016oco,
  title={Introduction to online convex optimization},
  author={Hazan, Elad},
  journal={Foundations and Trends in Optimization},
  volume={2},
  number={3-4},
  pages={157--325},
  year={2016},
  publisher={now Publishers Inc.}
}

@inproceedings{yi2021oco,
  title={Regret and cumulative constraint violation analysis for online convex optimization with long term constraints},
  author={Yi, Xinlei and Li, Xiuxian and Yang, Tao and Xie, Lihua and Chai, Tianyou and Johansson, Karl},
  booktitle={International conference on machine learning},
  pages={11998--12008},
  year={2021},
  organization={PMLR}
}

@article{chen2017oco,
  title={An online convex optimization approach to proactive network resource allocation},
  author={Chen, Tianyi and Ling, Qing and Giannakis, Georgios B},
  journal={IEEE Transactions on Signal Processing},
  volume={65},
  number={24},
  pages={6350--6364},
  year={2017},
  publisher={IEEE}
}

@article{cao2019oco,
  title={Online convex optimization with time-varying constraints and bandit feedback},
  author={Cao, Xuanyu and Liu, K. J. Ray},
  journal={IEEE Transactions on automatic control},
  volume={64},
  number={7},
  pages={2665--2680},
  year={2019},
  publisher={IEEE}
}

@inproceedings{valls2020online,
  title={Online convex optimization with perturbed constraints: Optimal rates against stronger benchmarks},
  author={Valls, Victor and Iosifidis, George and Leith, Douglas and Tassiulas, Leandros},
  booktitle={International Conference on Artificial Intelligence and Statistics},
  pages={2885--2895},
  year={2020},
  organization={PMLR}
}

@article{davisburns2011realtime,
  title={A survey of hard real-time scheduling for multiprocessor systems},
  author={Davis, Robert I and Burns, Alan},
  journal={ACM computing surveys (CSUR)},
  volume={43},
  number={4},
  pages={35:1--35:44},
  year={2011},
  publisher={ACM New York, NY, USA}
}

@article{bennis2018urllc,
  title={Ultrareliable and low-latency wireless communication: Tail, risk, and scale},
  author={Bennis, Mehdi and Debbah, M{\'e}rouane and Poor, H Vincent},
  journal={Proceedings of the IEEE},
  volume={106},
  number={10},
  pages={1834--1853},
  year={2018},
  publisher={IEEE}
}

@inproceedings{achiam2017cpo,
  title={Constrained policy optimization},
  author={Achiam, Joshua and Held, David and Tamar, Aviv and Abbeel, Pieter},
  booktitle={International conference on machine learning},
  pages={22--31},
  year={2017},
  organization={Pmlr}
}

@inproceedings{alshiekh2018shielding,
  title={Safe reinforcement learning via shielding},
  author={Alshiekh, Mohammed and Bloem, Roderick and Ehlers, R{\"u}diger and K{\"o}nighofer, Bettina and Niekum, Scott and Topcu, Ufuk},
  booktitle={Proceedings of the AAAI conference on artificial intelligence},
  volume={32},
  number={1},
  year={2018}
}

@article{paternain2019safe,
  title={Constrained reinforcement learning has zero duality gap},
  author={Paternain, Santiago and Chamon, Luiz and Calvo-Fullana, Miguel and Ribeiro, Alejandro},
  journal={Advances in Neural Information Processing Systems},
  volume={32},
  year={2019}
}

@article{zhang2026dynamic,
  title={Dynamic Regret with Untrusted Decision Predictions via Heterogeneous Expert Aggregation},
  author={Wentao Zhang},
  journal={Transactions on Machine Learning Research},
  issn={2835-8856},
  year={2026},
  url={https://openreview.net/forum?id=LWsEyfdnp9}
}

@article{zhang2026multiconstraint,
  title={Multi-Constraint Online Convex Optimization with Adversarial Constraints},
  author={Wentao Zhang},
  journal={Transactions on Machine Learning Research},
  issn={2835-8856},
  year={2026},
  url={https://openreview.net/forum?id=3sLjLHCGzS}
}

@article{zhang2026noiseadaptivehighprobabilityregretbounds,
  title={Noise-Adaptive High-Probability Regret Bounds for Online Convex Optimization},
  author={Wentao Zhang and Yutong Zhang and Wentao Mo},
  journal={arXiv preprint arXiv:2606.08028},
  year={2026},
  note={Accepted to ECML-PKDD 2026},
  url={https://arxiv.org/abs/2606.08028}
}

\end{document}